%% file: main.tex
\documentclass[lettersize,journal]{IEEEtran}

\IEEEoverridecommandlockouts                  

\usepackage{mathtools}
\usepackage{gensymb}
\usepackage[export]{adjustbox}
\usepackage{color}
\usepackage{graphicx}
\usepackage{booktabs}
\usepackage{epsfig} 
\usepackage{times} 
\usepackage{amsmath} 
\usepackage{amssymb}
\usepackage{amsthm}
\usepackage{leftidx}
\usepackage{stfloats}
\usepackage[noadjust]{cite}
\newtheorem{assumption}{Assumption}

\newtheorem{proposition}{Proposition}

\newtheorem{theorem}{Theorem}[section]

\newtheorem{lemma}[theorem]{Lemma}

\usepackage{siunitx}
\usepackage{nomencl}
\makenomenclature
\usepackage{hyperref}
\usepackage{csquotes}

\usepackage{enumitem}
\usepackage{tablefootnote}
\usepackage{ushort}

\usepackage{algorithm}
\usepackage{algorithmic}

\usepackage{stackengine}
\usepackage{multirow}  

\usepackage[caption=false,font=footnotesize]{subfig}

\title{\fontsize{22}{24}\selectfont
Learning Cross-Coupled and Regime Dependent Dynamics  for Aerial Manipulation}

\author{Rishabh Dev Yadav$^{1,3}$, Samaksh Ujjawal$^{2}$, Sihao Sun$^{3}$,  Spandan Roy$^{2}$, Wei Pan$^{4}$
\thanks{$^{1}$ Department of Computer Science, The University of Manchaster, U.K. ({\tt \footnotesize  rishabh.yadav@postgrad.manchaster.ac.uk) } }
\thanks{$^{2}$  International Institute of Information Technology Hyderabad. {\tt \footnotesize (samaksh.ujjawal@research.iiit.ac.in, \ spandan.roy@iiit.ac.in}).  }
\thanks{$^{3}$  Department of Cognitive Robotics, Delft University of Technology,
Netherlands  {\tt \footnotesize (s.sun-2@tudelft.nl}).  }
\thanks{$^{4}$  Newcastle University, UK. {\tt \footnotesize (wei.pan2@newcastle.ac.uk}).  }
}

\begin{document}
\include{our_settings}

\maketitle
\thispagestyle{empty}
\pagestyle{empty}
\setlength{\belowcaptionskip}{-10pt}

\begin{abstract}

Accurate dynamics models are critical for aerial manipulators operating under complex tasks such as payload transport. However, modeling these systems remains fundamentally challenging due to strong quadrotor--manipulator coupling, delayed aerodynamic interactions, and regime-dependent dynamics variations arising from payload changes and manipulator reconfiguration. These effects produce residual dynamics that are \emph{simultaneously} cross-coupled, history-dependent, and nonstationary, causing both analytical models and purely offline learned models to degrade during deployment. 
To address these challenges, we propose a structured encoder--decoder framework for adaptive residual dynamics learning in aerial manipulators. The proposed nonlinear latent encoder captures cross-variable coupling and temporal dependencies from state--input histories, while a lightweight linear latent decoder enables online adaptation under regime-dependent nonstationary dynamics. The linear-in-parameter decoder structure permits closed-form Bayesian adaptation together with consistency-driven covariance inflation, enabling rapid and stable adaptation to both transient and slowly varying dynamics changes while remaining compatible with real-time model predictive control (MPC). 
Experimental results on a real aerial manipulation platform demonstrate improved residual prediction accuracy, faster adaptation under changing operating conditions, and enhanced MPC-based trajectory tracking performance. These results highlight the importance of jointly modeling coupled temporal dynamics and deployment-time nonstationarity for reliable aerial manipulation.

\end{abstract}


\vspace{-8pt}

\section{Introduction}

Aerial manipulators combine multirotor aerial vehicles with articulated robotic arms, enabling tasks such as inspection, contact interaction, grasping, and payload transport~\cite{orsag2018aerial, hanover2021performance, yadav2025integrated}. The successful execution of these tasks relies heavily on accurate dynamic models, which serve as predictive world models for trajectory optimization, model-based control, long-horizon forecasting, and adaptation under changing operating conditions~\cite{ruggiero2018aerial, suarez2020benchmarks}.
However, obtaining accurate models for aerial manipulators is fundamentally challenging because manipulator motion continuously alters the system inertia distribution and center of mass, generating strong quadrotor--manipulator coupling effects~\cite{orsag2018aerial, yadav2024modular, ollero2022past}. 
These inertial couplings, together with aerodynamic interactions (e.g., rotor downwash and manipulator interference), introduce complex dynamics that are difficult to model analytically alone \cite{bauersfeld2021neurobem}.

\begin{figure}[h]
\centering
\includegraphics[width=0.95\linewidth]{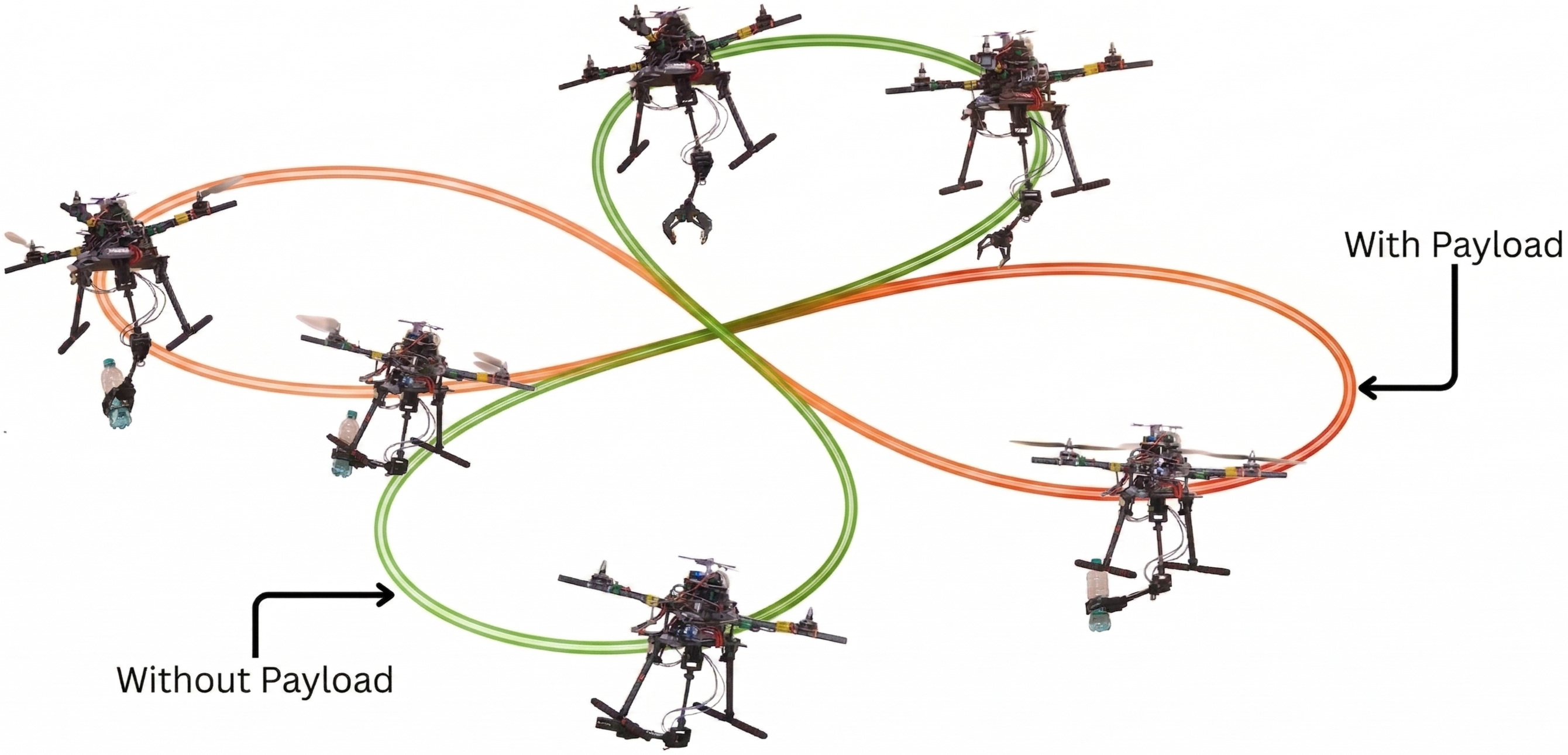}
    \caption{Time-lapse visualization of the aerial manipulator executing continuous trajectory tracking. The platform follows a horizontal figure-eight trajectory while transporting a payload (orange), releases the payload at the origin, and immediately transitions to an orthogonal trajectory (green).}
    \label{fig:teaser}
\vspace{-14pt}
\end{figure}

The challenge becomes even more severe because aerial manipulators rarely operate under a single stationary dynamic regime. Tasks such as payload pickup and release, grasping, contact interaction, and manipulator reconfiguration induce both abrupt dynamic transitions and slowly varying changes across operating regimes \cite{acosta2020accurate, suomalainen2022survey, pardo2017hybrid} (cf. Fig. \ref{fig:teaser}). As a result, the quadrotor--manipulator coupling characteristics themselves evolve during operation, causing the dynamics to vary across configurations and task phases. Consequently, the modeling challenge is not only to capture tightly coupled dynamics, but to do so under transient, continuously varying, and previously unseen operating conditions. This produces residual dynamics that are simultaneously coupled, regime-dependent, and nonstationary, making accurate modeling particularly difficult during deployment.

\begin{figure*}[h]
\centering
\vspace{-2mm}
\includegraphics[width=0.95\linewidth]{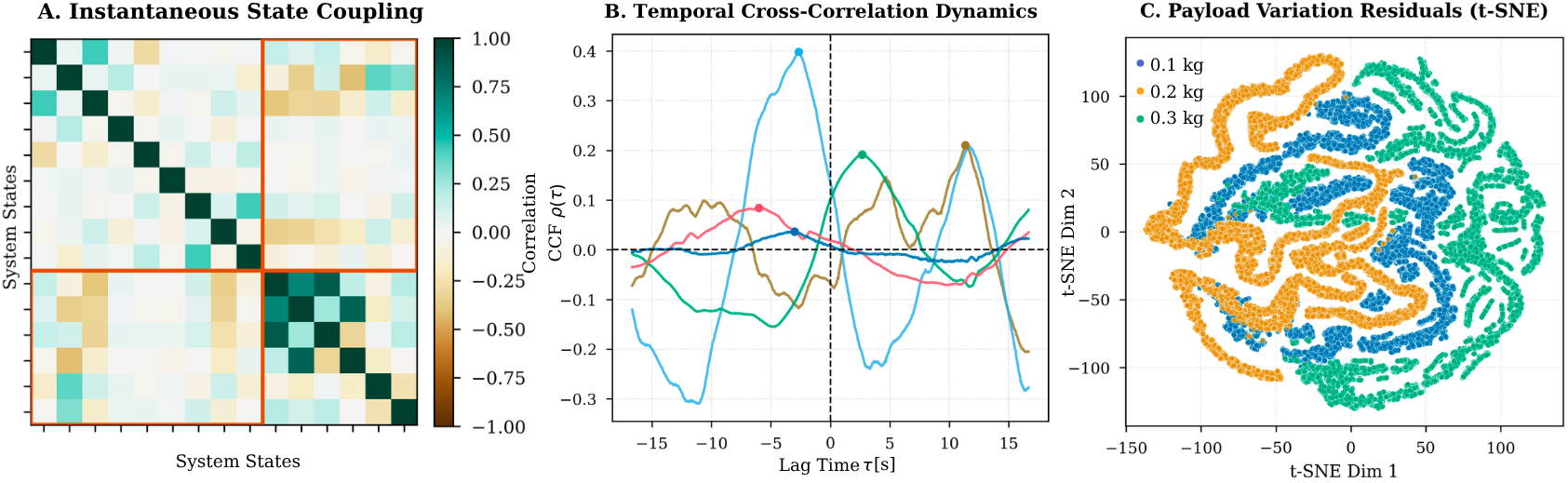}
    \caption{ \textbf{(A)} The cross-correlation matrix of the system states exhibits strong off-diagonal dependencies (highlighted by orange blocks), demonstrating severe dynamic coupling between the aerial platform and the articulated manipulator. \textbf{(B)} The cross-correlation function $\rho(\tau)$ reveals distinct peaks at non-zero lag times, indicating significant history-dependent effects such as actuator delay and aerodynamic interference. \textbf{(C)} A t-SNE projection of the unmodeled dynamics residuals under different payload masses (0.1\,kg, 0.2\,kg, and 0.3\,kg) reveals clear clustering and distribution shifts. These observations directly motivate the encoder design:  cross-variable attention models coupling across states (Fig. \ref{fig:motivation_fig}A), temporal attention captures delayed dependencies (Fig. \ref{fig:motivation_fig}B), and the need for online adaptation arises from distribution shifts (Fig. \ref{fig:motivation_fig}C).}
\label{fig:motivation_fig}
\end{figure*}

Residual learning in aerial manipulation is particularly attractive because physics-based nominal models already capture the dominant rigid-body behavior, allowing learning capacity to focus on unmodeled coupling effects~\cite{lee2020aerial, meng2020survey, saviolo2023learning}. As illustrated in Fig.~\ref{fig:motivation_fig}, effective residual modeling must simultaneously capture the following three key aspects:

\paragraph{Cross-Variable Coupling}
Manipulator motion induces configuration-dependent forces on the aerial platform, while platform motion influences manipulator dynamics through inertial and aerodynamic coupling. Capturing these \emph{cross-coupled dynamics} requires modeling interactions across state dimensions rather than treating variables independently.

\paragraph{Temporal Dependencies}
Residual dynamics depend on recent state–input trajectories due to delayed effects such as aerodynamic wake propagation and actuator dynamics. This necessitates history-based representations rather than instantaneous models.

\paragraph{Regime-Dependent Dynamics}
Changes in payload, configuration, or environment induce distribution shifts during deployment. Models trained offline cannot fully capture these variations, requiring efficient online adaptation compatible with closed-loop operation.

Recent works have increasingly explored data-driven and learning-based approaches for modeling complex robot dynamics \cite{jin2025learning}. However, both fixed analytical models and purely offline learned models often degrade under nonstationary regime-dependent residual dynamics. Therefore, learned models must remain compatible with real-time adaptation and maintain robustness under deployment-time distribution shifts~\cite{ jiahao2023online}. A more detailed discussion with respect to the state-of-the-art follows.

\subsection{Related Work}

Learning-based models have been widely explored for quadrotor dynamics modeling, including Deep Neural Networks (DNNs)~\cite{shi2019neural,salzmann2023real}, Gaussian Processes (GPs)~\cite{torrente2021data,cao2024computation}, Physics-informed networks~\cite{saviolo2022physics}, Recurrent Neural Networks (RNNs)~\cite{mohajerin2019multistep}, NeuralODEs~\cite{chee2023enhancing} and  NeuralSDEs~\cite{djeumou2023learn}. These models provide expressive approximations beyond analytical dynamics. However, they are typically trained offline on fixed datasets and therefore struggle under the time-varying conditions encountered in aerial manipulation, leading to distribution shifts that cause prediction errors to accumulate and degrade control performance when no online adaptation is available~\cite{lambert2022investigating}.

To address nonstationary disturbances, several approaches update the dynamics model during deployment. Some methods adapt only the final network layer using gradient descent~\cite{saviolo2023active,zhou2025simultaneous,wei2025mlmpcc}, which requires careful optimizer tuning and often exhibits poor sample efficiency under limited data~\cite{smith2017cyclical}. Others periodically retrain the full network and combine parameters through exponential moving averages~\cite{jiahao2023online}, introducing substantial computational overhead. Meta-learning strategies~\cite{o2022neural,gu2024proto,chakrabarty2024physics} instead interpolate among pre-trained models, limiting their ability to extrapolate to previously unseen dynamics~\cite{kaushik2020fast}. Alternative formulations incorporate GP regression~\cite{gahlawat2020l1}, Bayesian last-layer adaptation~\cite{mckinnon2021meta,lew2022safe}, or adaptive control schemes~\cite{richards2021adaptive} to enable online updates. 
However, these methods are not designed to simultaneously model strong quadrotor--manipulator coupling and regime-dependent nonstationary dynamics, where operational transitions continuously alter the underlying residual coupling structure during deployment. Moreover, they often rely on smooth, data-rich adaptation regimes that are rarely encountered in aerial manipulation systems.

Sequence models~\cite{vaswani2017attention, rao2024learning, das2025dronediffusion} improve prediction by exploiting temporal structure in trajectory data. Recent multivariate forecasting models~\cite{zhang2023crossformer, han2024mcformer} additionally incorporate variable-aware attention to capture dependencies across channels. However, these approaches are primarily developed for generic sequence forecasting and are typically trained offline. As a result, they provide limited support for lightweight online adaptation under nonstationary dynamics and are not explicitly designed for the coupled dynamic modeling  or handling nonstationary and abruptly changing dynamics typically arising during aerial manipulation.

\subsection{Contributions}

The above discussions reveal that existing learning-based methods are not designed to simultaneously address cross-coupled and regime-dependent residual dynamics during aerial manipulation. In this direction,  we propose a structured encoder–decoder framework for
adaptive residual dynamics learning with the following key contributions: 

\begin{enumerate}

\item \textbf{Nonlinear Residual Representation Learning:}
We propose a history-dependent latent encoder for aerial manipulators that captures cross-variable coupling and temporal dependencies in residual dynamics. The learned nonlinear representation models coupled interactions across system states while preserving temporal consistency under complex aerial manipulation trajectories.

\item \textbf{Online Regime-Adaptive Residual Estimation:}
We introduce a linear latent decoder for online adaptation under regime-dependent nonstationary dynamics. The proposed linear-in-parameter structure enables lightweight closed-form Bayesian adaptation together with consistency-driven covariance inflation, allowing rapid and stable adaptation to both transient and slowly varying dynamics changes while remaining compatible with real-time MPC-based control.

\end{enumerate}

We validate the proposed framework through residual prediction and real-world trajectory tracking experiments on an aerial manipulation platform, demonstrating improved prediction accuracy and faster adaptation under changing operating conditions.

\section{Methodology}
\label{sec:method}

\subsection{Problem Formulation}
\label{sec:problem_formulation}

We consider a discrete-time nonlinear system $x_{k+1} = f(x_k,u_k)$, where $x_k \in \mathbb{R}^{d}$ is the state and $u_k \in \mathbb{R}^{m}$ is the control input. We assume that the dominant rigid-body dynamics are described by a nominal physics-based model $f_{\mathrm{nom}}(\cdot)$, and represent the modeling error as an additive residual \cite{o2022neural}. The system is written as
\begin{equation}
x_{k+1}
=
f_{\mathrm{nom}}(x_k,u_k)
+
f_{\Delta}(H_k)
+
\varepsilon_k,
\label{eq:true_system}
\end{equation}
where $\varepsilon_k \sim \mathcal{N}(0, \mathrm{diag}(\sigma_1^2,\dots,\sigma_d^2))$ and $f_{\Delta}(\cdot)$ denotes the unknown residual dynamics,
which is allowed to depend on recent state--input history in order to capture effects. We define the history matrix
\begin{equation}
H_k =
\begin{bmatrix}
x_{k-h}^\top & u_{k-h}^\top \\
x_{k-h+1}^\top & u_{k-h+1}^\top \\
\vdots & \vdots \\
x_k^\top & u_k^\top
\end{bmatrix}
\in \mathbb{R}^{(h+1)\times n_c},
~~ n_c := d+m,
\label{eq:history_matrix}
\end{equation}
where $h$ is the history length.
Then, using the nominal model, we define the one-step residual target as $
\delta_{k+1}
:=
x_{k+1} - f_{\mathrm{nom}}(x_k,u_k)$.
From \eqref{eq:true_system}, we get
\begin{equation}
\delta_{k+1}
=
f_{\Delta}(H_k)
+
\varepsilon_k.
\label{eq:residual_target_model}
\end{equation}

Our goal is to predict $\delta_{k+1}$ accurately while retaining a model structure that supports efficient online adaptation under nonstationary dynamics. To this end, we approximate the residual dynamics as:
\begin{equation}
f_{\Delta}(H_k) \approx \Theta z_k,
\qquad
z_k = \phi(H_k) \in \mathbb{R}^{\ell},
\label{eq:latent_residual_model}
\end{equation}
where $\phi(\cdot): \mathbb{R}^{(h+1)\times n_c} \to \mathbb{R}^{\ell}$ is a nonlinear 
history encoder trained offline, and $\Theta \in \mathbb{R}^{d \times \ell}$ is a decoder matrix adapted online.
Here, nonlinear and history-dependent effects are captured through the learned feature map $\phi(\cdot)$, while the model remains linear in the decoder parameters $\Theta$, enabling efficient online adaptation. This follows from representing nonlinear dynamics as linear combinations of learned basis functions \cite{o2022neural}.


\begin{figure*}[t]
\centering
\vspace{-2mm}
\includegraphics[width=0.95\linewidth]{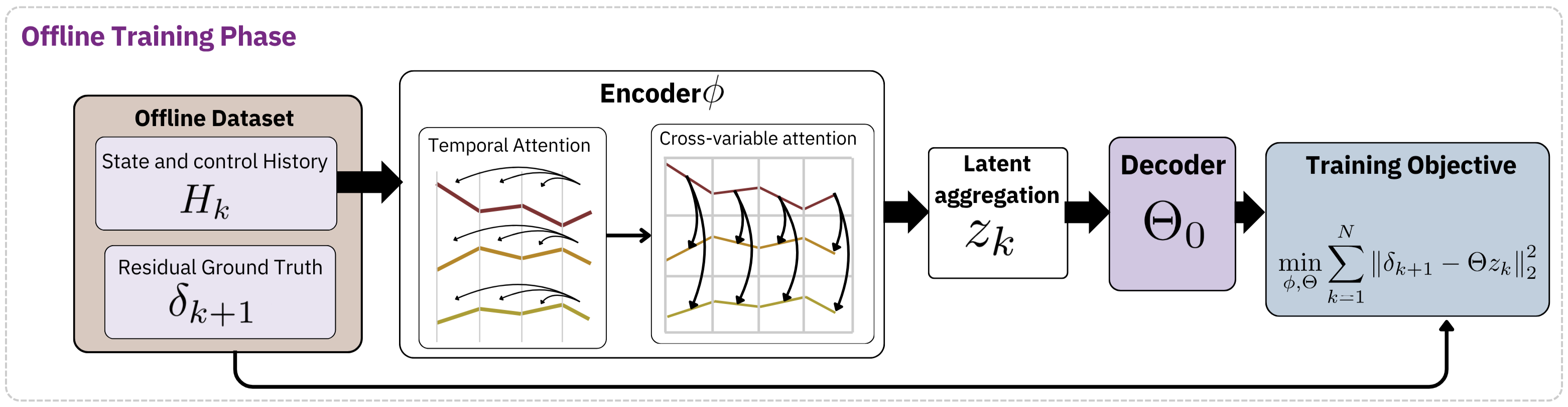}
    \caption{ A structured encoder extracts features from state--input history ($H_k$) using decoupled temporal and cross-variable attention. This isolates complex delayed and configuration-dependent couplings into a compact latent representation ($z_k$), which is mapped to the residual target ($\delta_{k+1}$) via a linear decoder ($\Theta_0$) for initialization of the online adaptation of decoder.}
    \label{fig:offline_training}
\end{figure*}

\subsection{Structured History Encoder}
\label{sec:encoder}

The residual model in \eqref{eq:latent_residual_model} requires a latent feature $z_k=\phi(H_k)$ that preserves both temporal dependencies and cross-variable coupling in the trajectory history, as shown in Fig \ref{fig:motivation_fig}. Temporal structure is necessary to capture delayed effects, while cross-channel interactions are required to model configuration-dependent coupling between the platform and manipulator. To this end, we encode $H_k$ using alternating temporal and cross-variable interaction blocks, where local temporal segments are first embedded into a shared feature space and then refined by modeling dependencies across time and channels.

The history is segmented along the temporal dimension. For each channel $c$, the corresponding trajectory is partitioned into segments of length $L_{\mathrm{seg}}$. Zero-padding is applied if necessary to obtain uniform segment length. Let $s_{i,c} \in \mathbb{R}^{L_{\mathrm{seg}}}$ denote the $i^{\text{th}}$ temporal segment of channel $c$. Each segment is embedded as
\begin{equation}
h_{i,c} = W_{\mathrm{emb}} s_{i,c} + e_{i,c},
\end{equation}
where $W_{\mathrm{emb}} \in \mathbb{R}^{d_{\mathrm{model}} \times L_{\mathrm{seg}}}$ is a learnable projection matrix, $d_{\mathrm{model}}$ is the embedding dimension, and $e_{i,c}$ encodes temporal position and channel identity. Stacking all embedded tokens yields
\[
Z \in \mathbb{R}^{L \times n_c \times d_{\mathrm{model}}},
\]
where $L$ is the number of temporal segments and $n_c=d+m$.

Each encoder layer applies two complementary operations.

\paragraph{Temporal attention}
Temporal self-attention is applied along the segment dimension independently for each channel:
\begin{equation}
Z^{(t)} = \mathcal{A}_t(Z),
\end{equation}
where $\mathcal{A}_t(\cdot)$ denotes multi-head self-attention across temporal segments. This captures delayed effects in the residual dynamics. The output is followed by a position-wise feedforward transformation with residual connection and normalization.

\paragraph{Cross-variable attention}
Cross-variable attention is then applied across channels within each temporal segment:
\begin{equation}
Z^{(c)} = \mathcal{A}_c(Z^{(t)}),
\end{equation}
where $\mathcal{A}_c(\cdot)$ denotes multi-head self-attention across channels. This captures interactions among state and input variables. The output is again processed by a position-wise feedforward transformation with residual connection and normalization.

Multiple such layers are stacked to obtain the final feature tensor.

\paragraph{Latent aggregation}
The encoded features are aggregated using attention-based pooling over temporal segments and channels:
\begin{equation}
\tilde{z}_k = \sum_{i,c} \alpha_{i,c} \, Z^{(c)}_{i,c},
\end{equation}
where $\alpha_{i,c}$ are normalized attention weights computed from the encoder features. 
The final latent representation is then obtained through a projection network
\begin{equation}
z_k = f_{\mathrm{proj}}(\tilde{z}_k),
\qquad
f_{\mathrm{proj}}:\mathbb{R}^{d_{\mathrm{model}}}\rightarrow\mathbb{R}^{\ell},
\end{equation}
so that $z_k \in \mathbb{R}^{\ell}$ is compatible with the residual model in \eqref{eq:latent_residual_model}.

\subsection{Offline Training}
\label{sec:offline_training}

To enable efficient online adaptation, we parameterize the decoder in a form suitable for recursive estimation. The decoder matrix is written row-wise as
$\Theta = [\theta_1^\top \; \theta_2^\top \; \cdots \; \theta_d^\top]^\top, 
\qquad \theta_j \in \mathbb{R}^{\ell},$
so that each residual component is modeled as
$\delta_{k+1}^{(j)} = z_k^\top \theta_j + \varepsilon_k^{(j)}.$
Here, $\theta_j$ denotes the offline-trained decoder parameter for output dimension $j$, which serves as the initialization for the online adaptation.


The encoder $\phi(\cdot)$ and an initial decoder $\Theta$ are learned offline from a dataset of trajectory segments $
\mathcal{D}_{\mathrm{train}} = \{(H_k, \delta_{k+1})\}_{k=1}^{N}$.
The encoder maps each history to a latent feature $z_k=\phi(H_k)$, and the parameters are obtained by minimizing the empirical prediction error
\begin{equation}
\min_{\phi,\Theta}
\sum_{k=1}^{N}
\left\|
\delta_{k+1} - \Theta z_k
\right\|_2^2.
\label{eq:offline_objective}
\end{equation}
Under the Gaussian noise assumption, \eqref{eq:offline_objective} corresponds to maximum likelihood estimation and offline training is illustrated in Fig.~\ref{fig:offline_training}. The resulting parameters define the pretrained encoder and the initial decoder $\Theta_0$, which provides a well-conditioned starting point for subsequent online updates.

\section{Online Decoder Update}

\subsection{Recursive Decoder Update}
\label{sec:bayesian_update}

During deployment, the encoder $\phi(\cdot)$ is kept fixed and only the decoder parameters are adapted online. The offline-trained parameters $\theta_j$ serve as the initialization and are updated recursively to obtain time-varying parameters $\theta_{k,j}$. For each output dimension $j$, the latent residual model is
\begin{equation}
\delta_{k+1}^{(j)} = z_k^\top \theta_{k,j} + \varepsilon_{k}^{(j)},
\qquad
\varepsilon_{k}^{(j)} \sim \mathcal{N}(0,\sigma_j^2),
\end{equation}
where $z_k=\phi(H_k)$ and $\theta_{k,j} \in \mathbb{R}^{\ell}$.
The observation noise is modeled with diagonal covariance for computational tractability in the recursive update. Cross-dimensional dependencies in the residual dynamics are instead captured through the shared latent feature $z_k$, which couples all output dimensions.

To account for nonstationary dynamics, the parameter evolution is modeled as a random walk
\begin{equation}
\theta_{k,j} = \theta_{k-1,j} + w_{k,j},
\qquad
w_{k,j} \sim \mathcal{N}(0,Q_{k,j}),
\end{equation}
where $Q_{k,j}$ is the process noise covariance.


We maintain a Gaussian posterior over $\theta_{k,j}$ of the form
$\theta_{k,j} \mid \mathcal{D}_k \sim \mathcal{N}(\mu_{k|k,j}, P_{k|k,j}),$
where $\mathcal{D}_k = \{(z_t,\delta_{t+1})\}_{t=0}^{k-1}$. Here, $\mu_{k|k,j}$ denotes the posterior mean estimate of the parameter $\theta_{k,j}$, 
The posterior is updated recursively using a Kalman filter in parameter space. The decoder update consists of the following two steps:

\paragraph{Prediction}
Given the posterior at time $k-1$, the prior for step $k$ is
\begin{equation}
\mu_{k|k-1,j} = \mu_{k-1|k-1,j}, 
~~P_{k|k-1,j} = P_{k-1|k-1,j} + Q_{k,j}.
\end{equation}
The one-step-ahead residual prediction is
\begin{equation}
\hat{\delta}_{k+1|k}^{(j)} = z_k^\top \mu_{k|k-1,j},
\end{equation}
and the corresponding predictive variance is
\begin{equation}
S_{k,j} = z_k^\top P_{k|k-1,j} z_k + \sigma_j^2.
\end{equation}

\paragraph{Recursive update}
After observing $x_{k+1}$, the residual target $\delta_{k+1}^{(j)}$ becomes available and the posterior is updated using the new data pair $(z_k,\delta_{k+1}^{(j)})$:
{\small
\begin{align} \label{eq:recursive update}
K_{k,j}
&=
P_{k|k-1,j} z_k
\left(z_k^\top P_{k|k-1,j} z_k + \sigma_j^2 \right)^{-1}, \\
\mu_{k|k,j}
&=
\mu_{k|k-1,j}
+
K_{k,j}\left(\delta_{k+1}^{(j)} - z_k^\top \mu_{k|k-1,j}\right), \nonumber \\
P_{k|k,j}
&=
(I - K_{k,j} z_k^\top)\,P_{k|k-1,j}\,(I - K_{k,j} z_k^\top)^\top
+
\sigma_j^2 K_{k,j} K_{k,j}^\top . \nonumber
\end{align}
}


For compact notation, the decoder matrix is formed by stacking posterior means as
\begin{equation}
\Theta_k =
\begin{bmatrix}
\mu_{k|k,1}^\top \\
\vdots \\
\mu_{k|k,d}^\top
\end{bmatrix}
\in \mathbb{R}^{d \times \ell},
\end{equation}
yielding the vector prediction $\hat{\delta}_{k+1|k} = \Theta_{k-1} z_k$. The per-step computational complexity scales as $\mathcal{O}(d\,\ell^2)$, making the update suitable for real-time deployment.

\subsection{Shift Detection via Innovation Consistency}
\label{sec:shift_detection}

While the recursive update in Section~\ref{sec:bayesian_update} enables continuous adaptation, abrupt changes in system dynamics can temporarily invalidate the current parameter estimate and slow down adaptation when the posterior covariance is small. To detect such mismatches, we monitor the consistency between predicted and observed residuals.

At time step $k$, the innovation is defined as
\begin{equation}
e_k = \delta_{k+1} - \hat{\delta}_{k+1|k},
\qquad
\hat{\delta}_{k+1|k} = \Theta_{k-1} z_k,
\end{equation}
where $\Theta_{k-1}$ is formed from the posterior means at time $k-1$. Let $S_{k,j} = z_k^\top P_{k|k-1,j} z_k + \sigma_j^2$ denote the predictive variance for each output dimension, and define
\[
S_k = \mathrm{diag}(S_{k,1}, \dots, S_{k,d}).
\]

We quantify prediction consistency using the normalized innovation score
\begin{equation}
\psi_k = e_k^\top S_k^{-1} e_k.
\label{eq:nis_score}
\end{equation}
This score corresponds to a Mahalanobis distance between predicted and observed residuals, normalized by the model’s predicted uncertainty.
To reduce sensitivity to transient noise, we employ an exponentially weighted moving average (EWMA)
\begin{equation}
g_k = \alpha \psi_k + (1-\alpha) g_{k-1}, 
\qquad g_0 = 0,
\label{eq:ewma_update}
\end{equation}
where $\alpha \in (0,1]$ controls the smoothing. A sustained increase in $g_k$ indicates a mismatch between the current model and observed dynamics.


\begin{figure*}[]
\centering
\vspace{-2mm}
\includegraphics[width=0.95\linewidth]{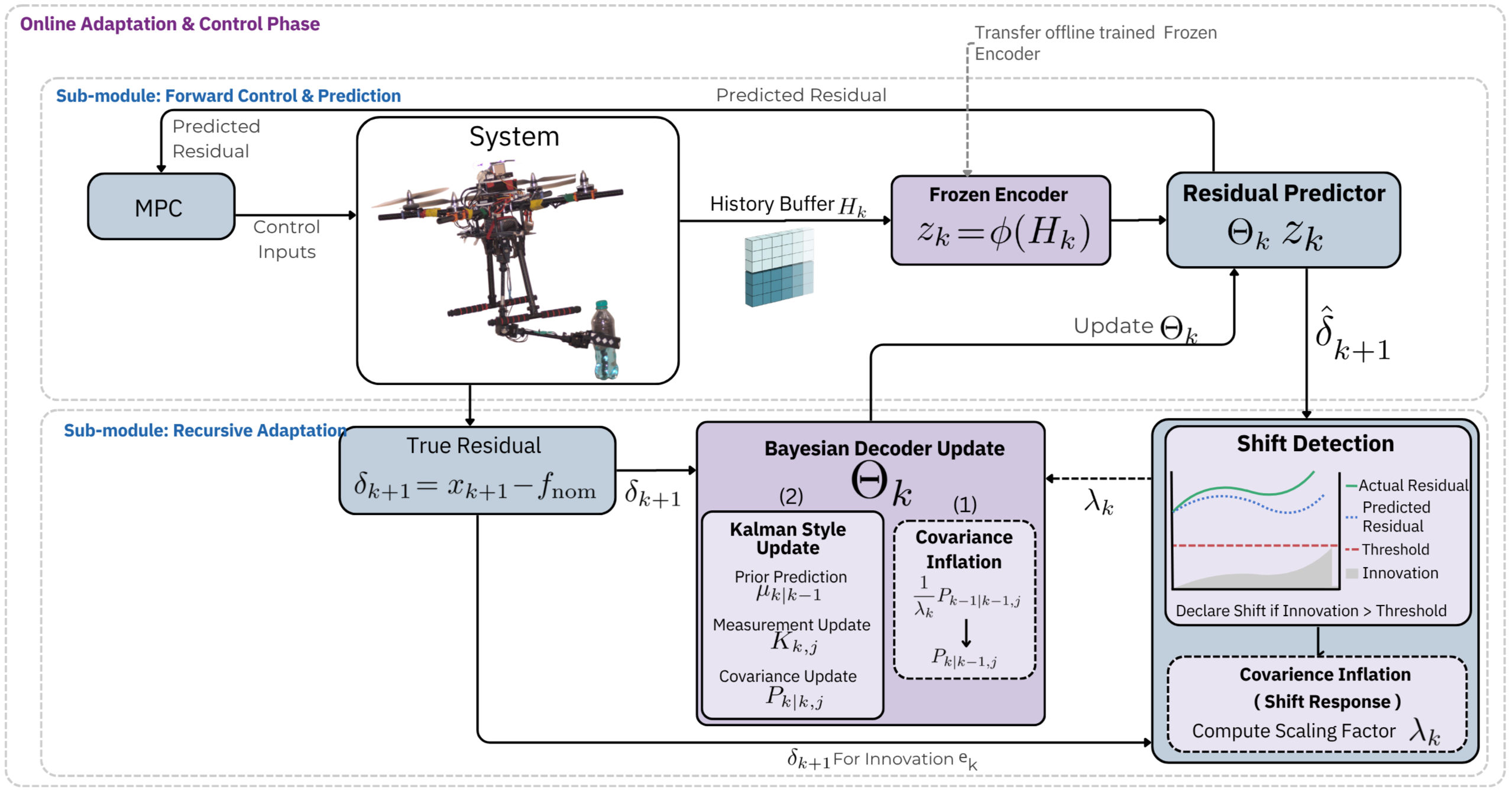}
    \caption{ During closed-loop deployment, the encoder is frozen to maintain real-time tractability. A recursive filter continuously updates the linear decoder ($\Theta_k$). To rapidly overcome abrupt nonstationary disturbances (e.g., payload changes), a shift detection module monitors prediction innovation and triggers adaptive covariance inflation ($\lambda_k$). The continuously adapted residual predictor then augments the nominal dynamics within the MPC.}
    \label{fig:online_update}
\end{figure*}

\subsection{Adaptive Covariance Inflation}
\label{sec:shift_adaptation}

To enable rapid adaptation under nonstationary dynamics, we adjust the parameter uncertainty based on the consistency measure introduced in Section~\ref{sec:shift_detection}. Rather than specifying a fixed process noise covariance, we introduce a state-dependent covariance inflation mechanism that is equivalent to an adaptive forgetting factor. This allows the estimator to increase its responsiveness when the model becomes inconsistent with observed data.

We retain the parameter evolution model
\begin{equation}
\theta_{k,j} = \theta_{k-1,j} + w_{k,j}, 
\qquad
w_{k,j} \sim \mathcal{N}(0,Q_{k,j}),
\end{equation}
but define the process noise covariance as a function of the current posterior uncertainty:
\begin{equation}
Q_{k,j} = \left(\frac{1}{\lambda_k} - 1\right) P_{k-1|k-1,j}.
\end{equation}
Substituting into the covariance propagation step yields
\begin{equation}
P_{k|k-1,j} = P_{k-1|k-1,j} + Q_{k,j}
= \frac{1}{\lambda_k} P_{k-1|k-1,j}.
\end{equation}

This formulation directly scales the prior covariance by a factor $1/\lambda_k$, effectively controlling the memory of the estimator. Smaller values of $\lambda_k$ increase the covariance, leading to larger Kalman gains and faster adaptation to new data.

The scaling factor $\lambda_k \in (0,1]$ is defined as
\begin{equation}
\lambda_k = \frac{1}{1 + \beta \max(0, g_{k-1} - \eta)},
\label{eq:lambda_adaptive}
\end{equation}
where $g_k$ is the EWMA-smoothed innovation score, $\eta > 0$ is a threshold, and $\beta > 0$ controls the sensitivity of the adaptation.

When the model remains consistent with observed data ($g_k \le \eta$), $\lambda_k \approx 1$, and the update reduces to standard Bayesian linear regression with slowly varying parameters. When persistent mismatch is detected ($g_k > \eta$), $\lambda_k < 1$ inflates the covariance, increasing the Kalman gain and accelerating adaptation to changing dynamics.

Combined with the recursive update in Section~\ref{sec:bayesian_update}, the covariance evolves according to a two-stage process at each time step:
\[
P_{k-1|k-1,j}
\;\xrightarrow{\text{inflation}}\;
P_{k|k-1,j}
\;\xrightarrow{\text{Bayesian update}}\;
P_{k|k,j}.
\]
The first step increases uncertainty through adaptive inflation, while the second incorporates new information via the Bayesian update. This mechanism provides a continuous adaptation strategy without resetting the estimator, allowing rapid response to abrupt changes such as payload variation or contact events, while maintaining stable estimation under nominal conditions. The online update mechanism is illustrated in Fig \ref{fig:online_update}.

\section{Model Predictive Control for Tracking}
\label{sec:mpc}

To achieve closed-loop trajectory tracking, we embed the learned residual model within a standard nonlinear MPC framework. At each control step, the controller solves a finite-horizon optimal control problem using the nominal dynamics augmented by the current residual estimate.

The MPC uses the posterior mean of the adaptive decoder in a deterministic prediction model. For real-time tractability, both the decoder matrix $\Theta_{k-1}$ and the latent feature $z_k=\phi(H_k)$ are held fixed over the prediction horizon during each MPC solve. Let $x_{k+i|k}$ and $u_{k+i|k}$ denote the predicted state and control at stage $i$ of the horizon, with $x_{k|k}=x_k$. The controller solves
\begin{align}
\min_{\{u_{k+i|k}\}_{i=0}^{N-1}} \quad
& \sum_{i=0}^{N-1} J_s(x_{k+i|k},u_{k+i|k}) + J_f(x_{k+N|k})
\label{eq:mpc_cost} \\
\text{s.t.} \quad
x_{k+i+1|k}
&=
f_{\mathrm{nom}}(x_{k+i|k},u_{k+i|k})
+
\Theta_{k-1} z_k,
\label{eq:mpc_dynamics} \\
x_{k|k} = x_k,  
~~&x_{k+i|k} \in \mathcal{X}, 
~~u_{k+i|k} \in \mathcal{U},  ~~i=0,\dots,N-1, \nonumber
\end{align}
where $\mathcal{X}$ and $\mathcal{U}$ denote the state and input constraint sets. The stage and terminal costs are defined as
\[
J_s(x,u)=\|x-x^r\|_Q^2+\|u-u^r\|_R^2,
~~J_f(x)=\|x-x^r\|_P^2,
\]
with reference trajectory $(x^r,u^r)$ and positive semidefinite weighting matrices $Q$, $R$, and $P$.
In this implementation, the learned residual is treated as a constant additive correction over the prediction horizon. This avoids repeatedly evaluating the encoder on predicted trajectories inside the optimizer and yields a simple and reliable MPC formulation for real-time deployment. Algorithm \ref{alg:method} details the procedure step by step.

\begin{algorithm}[t]
\caption{Structured Learning with Online Adaptation}
\label{alg:method}

\textbf{Offline Training}
\begin{algorithmic}[1]
\STATE Construct dataset $\mathcal{D}_{\mathrm{train}}=\{(H_k,\delta_{k+1})\}_{k=1}^{N}$
\STATE Train encoder $\phi$ and initial decoder $\Theta_0$ by solving \eqref{eq:offline_objective}
\STATE Initialize $\mu_{0|0,j}=\theta_j^{0}$ and $P_{0|0,j}=\Lambda^{-1}$ for $j=1,\dots,d$
\STATE Initialize consistency statistic $g_0=0$
\end{algorithmic}

\vspace{0.5em}
\textbf{Online Deployment}
\begin{algorithmic}[1]

\STATE \textit{// --- Adaptive covariance inflation (prediction step) ---}
\STATE Compute covariance scaling factor $\lambda_k$ from $g_{k-1}$ via \eqref{eq:lambda_adaptive}

\STATE \textbf{for} $j=1,\dots,d$ \textbf{do}
\STATE \quad Inflate covariance (adaptive forgetting):
$
P_{k|k-1,j} = \frac{1}{\lambda_k} P_{k-1|k-1,j}
$
\STATE \quad Set prior mean:
$
\mu_{k|k-1,j} = \mu_{k-1|k-1,j}
$
\STATE \textbf{end for}

\STATE Form prior decoder $\Theta_{k|k-1}$ from $\{\mu_{k|k-1,j}\}_{j=1}^d$

\STATE Predict residual
$
\hat{\delta}_{k+1|k}=\Theta_{k|k-1}z_k
$

\STATE Solve MPC \eqref{eq:mpc_cost}--\eqref{eq:mpc_dynamics} and apply the first input $u_k$

\STATE Observe $x_{k+1}$; compute
$
\delta_{k+1}=x_{k+1}-f_{\mathrm{nom}}(x_k,u_k)
$

\STATE Compute innovation
$
e_k=\delta_{k+1}-\hat{\delta}_{k+1|k}
$

\STATE Compute predictive covariance
$
S_k=\mathrm{diag}(S_{k,1},\ldots,S_{k,d})
$

\STATE Compute consistency score
$
\psi_k=e_k^\top S_k^{-1}e_k
$

\STATE Update consistency statistic
$
g_k=\alpha\psi_k+(1-\alpha)g_{k-1}
$

\STATE \textit{// --- Bayesian measurement update (correction step) ---}
\STATE \textbf{for} $j=1,\dots,d$ \textbf{do}
\STATE \quad Apply recursive Bayesian update via \eqref{eq:recursive update}
\STATE \quad using $(z_k,\delta^{(j)}_{k+1})$ and prior $(\mu_{k|k-1,j},P_{k|k-1,j})$
\STATE \quad Obtain posterior $(\mu_{k|k,j},P_{k|k,j})$
\STATE \textbf{end for}
\end{algorithmic}
\end{algorithm}

\section{Experimental Validation and Analysis}
We evaluate our proposed framework under realistic sensing, actuation, and environmental uncertainties. Here, we assess both (i) model validation: model accuracy in open-loop prediction and (ii) trajectory tracking: control performance in a challenging
payload disturbance scenario.

\textbf{Experimental Scenarios:}
Two trajectory-tracking scenarios are designed to evaluate performance under coupled dynamics and abrupt changes induced by payload release (cf. Fig.~\ref{fig:trajectory_scenerio}. In both cases, the aerial manipulator takes off from the origin and stabilizes at a height of $1\,\mathrm{m}$ with an attached payload.
\emph{Scenario A (planar switching):}
The system first tracks a figure-eight trajectory in the $xz$-plane (red). Upon returning to the origin, the payload is released, inducing a sudden change in system dynamics. The platform then transitions to tracking a figure-eight trajectory in the $yz$-plane (green), requiring adaptation to both the payload change and a shift in motion plane.
\emph{Scenario B (axis reorientation):}
The system initially tracks a figure-eight trajectory whose major axis is aligned with the $x$-direction (red). After payload release at the origin, the reference trajectory is reoriented such that its major axis lies along the $y$-direction (green), introducing both dynamic and directional changes.

In both scenarios, two payloads $300$ g and $500$ g are used with mean quadrotor velocity $0.5$, and the manipulator joint is actuated within a range of $[-45^\circ, 45^\circ]$ to induce configuration-dependent coupling and center-of-mass variation during flight. 

\begin{figure}[!h]
\centering
\includegraphics[width=0.98\linewidth]{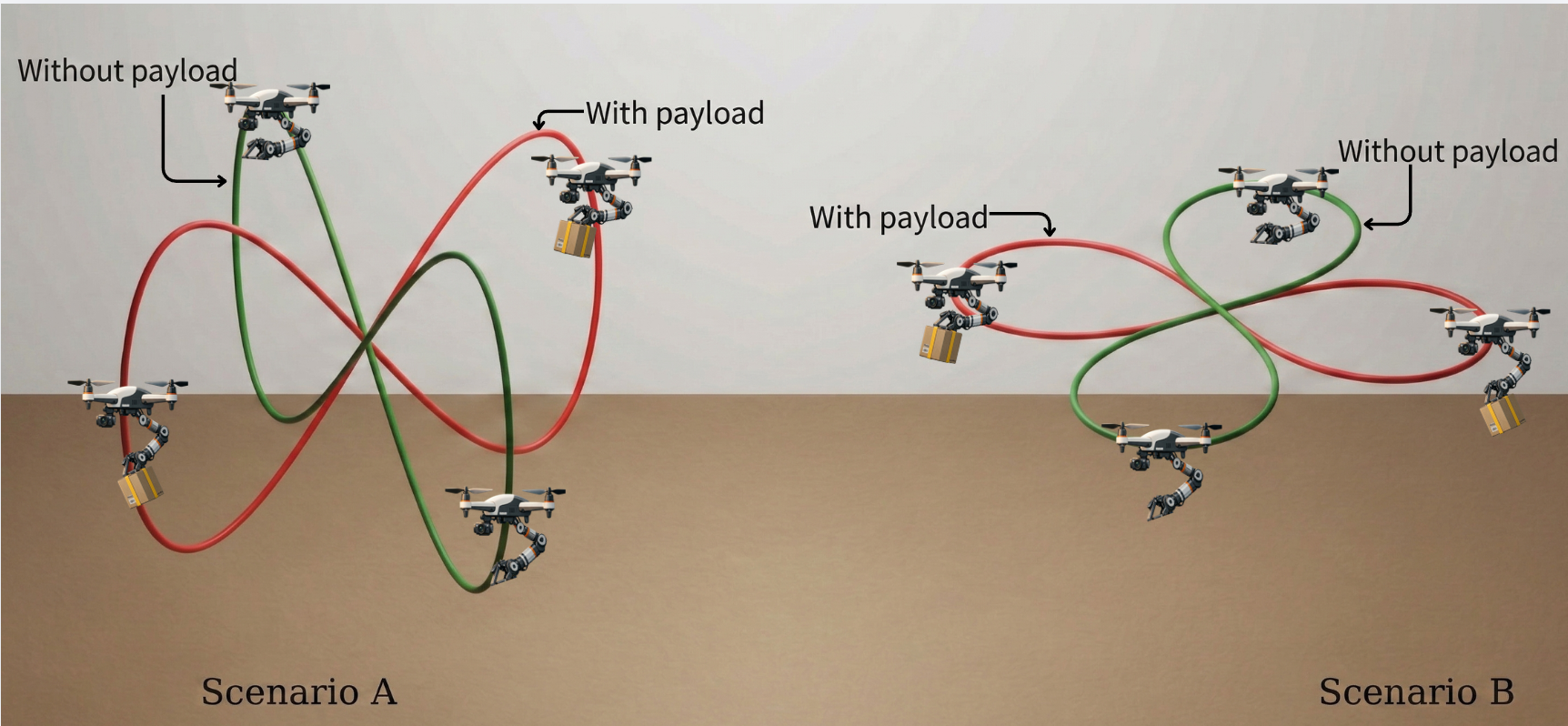}
    \caption{\textbf{Visualization of experimental trajectory tracking scenarios.} The aerial manipulator begins by tracking a reference trajectory while carrying a payload (red path) without the payload (green path). \textbf{(Left) Scenario A:} Planar switching from the $xz$-plane (red) to the $yz$-plane (green). \textbf{(Right) Scenario B:} Axis reorientation of a horizontal figure-eight from the $x$-direction (red) to the $y$-direction (green).}
    \label{fig:trajectory_scenerio}
\end{figure}

\subsection{Model Validation: Open-Loop Prediction Accuracy}

For predictive accuracy evaluation, the quadrotor tracks a smooth reference trajectory using a vanilla PID controller for 5~min in Scenerio B. A payload of $300$~g is attached to the end-effector, and is suddenly dropped at $t = 2$~min.
The model prediction error, defined as $\delta_e = \delta_k - \phi(\cdot)\Theta$, is computed at each timestep. All results are reported over $10$ independent trials.

\paragraph{Encoder ablation.}
To evaluate the contribution of the proposed encoder, we compare four methods: \textbf{(i) DNN}~\cite{shi2019neural} uses only the current state and input $(x_k,u_k)$ \textbf{(ii) TCN}~\cite{saviolo2022physics} processes the full trajectory history $H_k$ using temporal convolutions. \textbf{(iii) Temporal-only encoder} uses the proposed segmented history representation and temporal attention blocks, but omits cross-variable interaction blocks, as described in Section~\ref{sec:encoder}. \textbf{(iv) Full structured encoder} uses both temporal and cross-variable interaction blocks. There is no online decoder adaptation in the last two methods. Performance comparison is shown in Table \ref{tab:ablation_1}.

\begin{table}[h]
\renewcommand{\arraystretch}{1.1}
\captionsetup{font=footnotesize}
\caption{Open-Loop Encoder performance ablation (rmse)}
\centering
\scalebox{0.9}{
\begin{tabular}{lcccc}
\toprule
\textbf{Method} & \textbf{DNN} & \textbf{TCN} & \textbf{Temporal only} & \textbf{Full Structured}  \\
\midrule
\textbf{RMSE}  & 0.34 & 0.26 & 0.21 & 0.18 \\
\bottomrule
\end{tabular}
}
\label{tab:ablation_1}
\end{table}

The performance trend shows that DNN performs worst because many effects in aerial manipulation are history-dependent and cannot be inferred from $(x_k,u_k)$ alone. TCNs capture delayed effects but rely on fixed convolutions that treat all past samples equally. A temporal-only encoder improves performance by attending to relevant time steps, better modeling time-varying delays and transients. The full encoder performs best because it additionally captures cross-variable interactions, which are important in aerial manipulation due to coupling among platform motion, manipulator configuration, and control inputs. These results show that both temporal structure and cross-variable coupling are important for accurate residual modeling.

\paragraph{Decoder ablation.}
To evaluate the online adaptation mechanism, we fix the proposed encoder and compare four decoder configurations.
\emph{(i) No decoder update} keeps the offline-trained decoder fixed during deployment.
\emph{(ii) Bayesian linear update only} performs recursive Bayesian linear updates at every step with no shift-aware covariance adaptation.
\emph{(iii) Fixed forgetting} augments the Bayesian update with a constant forgetting factor ($\lambda = 0.5$), equivalently a fixed process noise covariance, independent of the innovation statistics.
\emph{(iv) Full adaptive decoder update} uses the proposed consistency-driven covariance inflation based on the EWMA-smoothed innovation score.
All variants use the same latent representation, so the comparison isolates the effect of online decoder adaptation.

\begin{figure}[!h]
\centering
\includegraphics[width=0.98\linewidth]{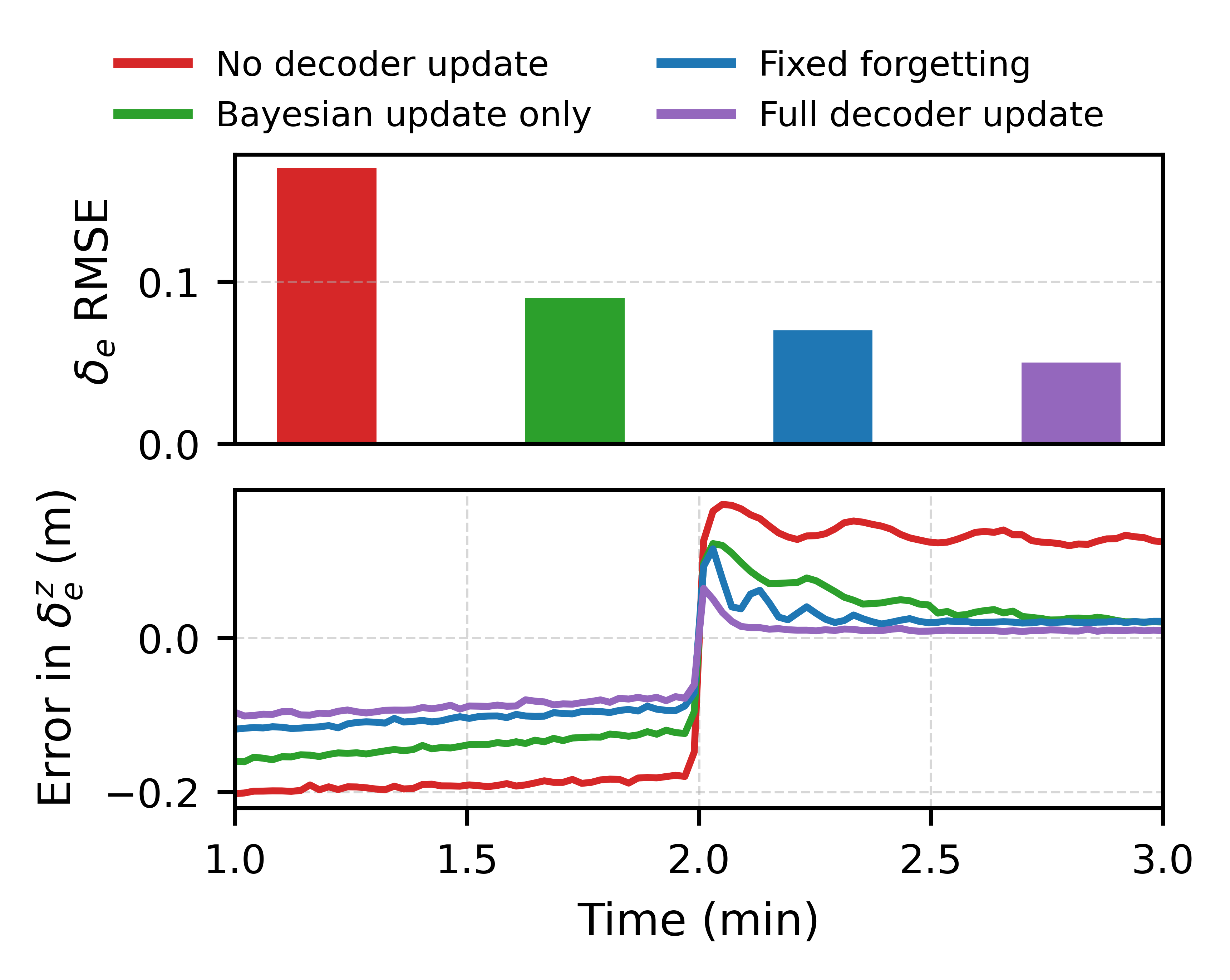}
\caption{Decoder ablation under an abrupt dynamics change at $t=2.0$ min. Top: model-prediction error RMSE. Bottom: model prediction error in $\delta_e^z$. The proposed full adaptive decoder update achieves the lowest RMSE and the fastest, smoothest recovery after the shift.}
\label{fig:decoder_ablation}
\end{figure}

The results in Fig.~\ref{fig:decoder_ablation} are consistent with the role of each update mechanism. The fixed decoder produces the largest RMSE and, after the shift at $t=2.0$ min, exhibits the largest jump and slowest recovery, indicating that the offline model alone cannot compensate for the changed dynamics. Bayesian linear updating improves performance and reduces the steady-state error, but the transient remains relatively slow because the update stays conservative when the covariance has already contracted. Fixed forgetting further improves adaptation speed, but the response becomes more oscillatory after the shift, reflecting the trade-off introduced by using a constant adaptation rate under both nominal and shifted conditions. The full adaptive update achieves the lowest RMSE and the best transient behavior, with the smallest post-shift excursion and the fastest smooth settling. This shows that online decoder adaptation is necessary, and that consistency-driven covariance inflation is particularly effective for abrupt nonstationary changes such as payload variation.

\begin{figure}[!h]
\centering
\includegraphics[width=0.98\linewidth]{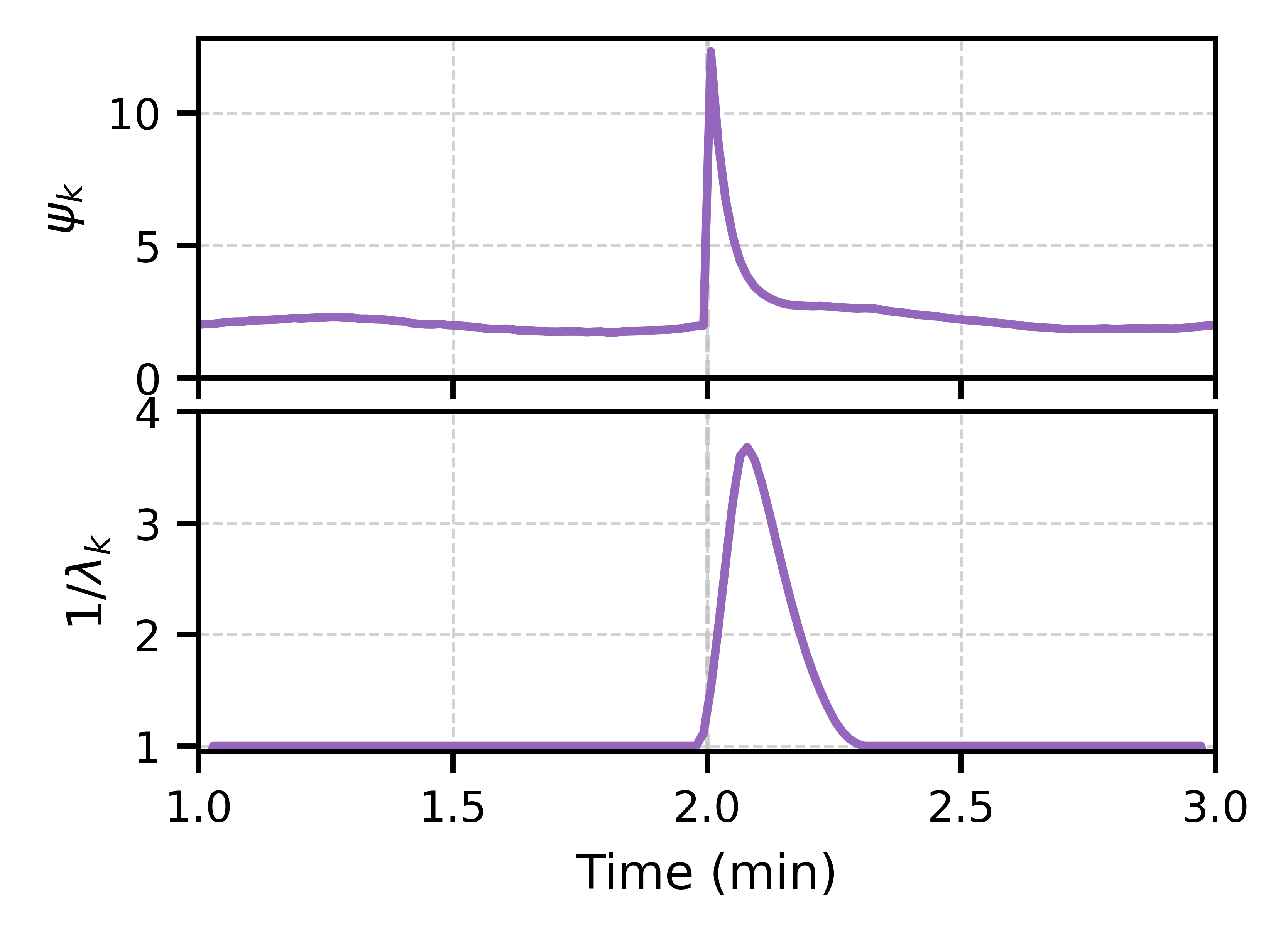}
\caption{
\textbf{Consistency-driven adaptation under abrupt dynamics change.}
Top: instantaneous normalized innovation score $\psi_k$. 
Bottom: adaptive covariance scaling $1/\lambda_k$. 
The payload drop at $t=2.0$ min induces a sharp increase in innovation, triggering covariance inflation and enabling rapid adaptation of the decoder.
}
\label{fig:consistency}
\end{figure}

In Fig. \ref{fig:consistency}, at the moment of the payload release, the innovation score $\psi_k$ exhibits a pronounced spike, indicating a mismatch between predicted and observed dynamics. This increase propagates through the consistency mechanism, resulting in a temporary rise in $1/\lambda_k$, which amplifies the adaptation rate via covariance inflation. As the model adapts, both the innovation and the adaptive response decay, demonstrating stable and efficient recovery under nonstationary conditions.

\begin{figure*}[!h]
\centering
\includegraphics[width=0.98\linewidth]{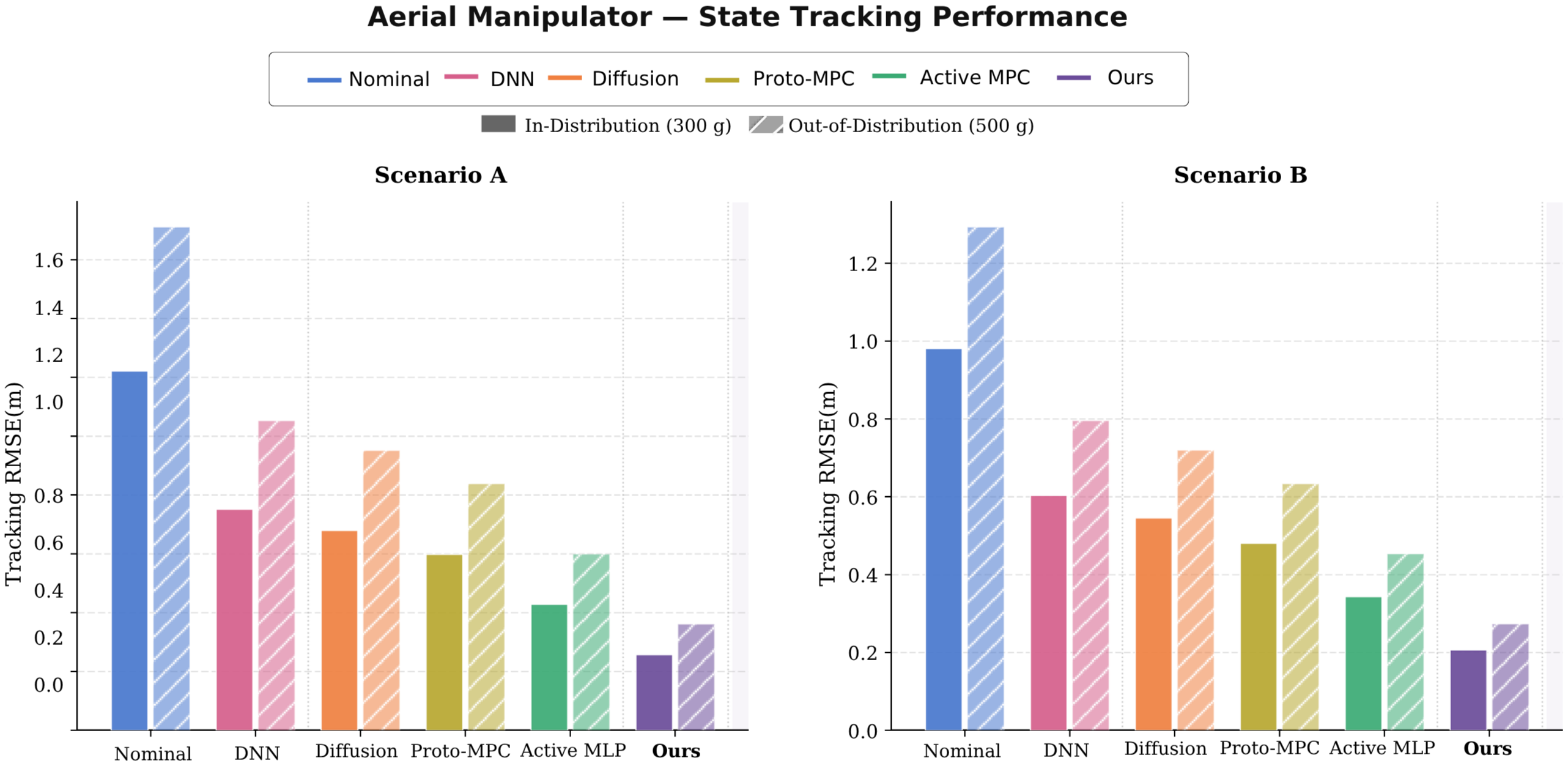}
    \caption{\textbf{Closed-loop trajectory tracking and disturbance rejection performance.} (\textbf{Left}) Tracking RMSE for Scenario A (planar switching) and (\textbf{Right}) Scenario B (axis reorientation) at 0.5\,m/s under both in-distribution (300\,g, solid bars) and out-of-distribution (500\,g, hatched bars) payload conditions. }
    \label{fig:tracking_result}
\end{figure*}

\subsection{Trajectory Tracking \& Disturbance Rejection}
\label{subsec:tracking}

We evaluate trajectory tracking performance, for scenarios A and B and shown in Fig. \ref{fig:trajectory_scenerio}, and computing the root-mean-square error (RMSE) of error signal $e(t)$ for each method. Fig \ref{fig:tracking_result} shows the performance of compared baselines under same testing scenarios.

The tracking performance follows the order
\text{Nominal}
$<$ \text{DNN} \cite{shi2019neural}
$<$ \text{Diffusion} \cite{das2025dronediffusion}
$<$ \text{Proto-MPC} \cite{gu2024proto}
$<$ \text{Active MLP} \cite{saviolo2023active}
$<$ \text{Ours}.
The nominal model performs worst because it ignores payload-induced mass/inertia changes, manipulator--platform coupling, actuator delay, and aerodynamic residuals. The DNN baseline improves over the nominal model by learning a residual correction, but it is trained offline and uses only instantaneous state--input information; therefore, it cannot adapt to deployment-time distribution shifts or capture history-dependent effects. Diffusion provides a richer offline residual model and can better represent complex uncertainty than a deterministic DNN, but without online parameter adaptation it still cannot rapidly compensate for abrupt payload release or changing manipulator configurations. Proto-MPC improves over offline models through prototype-based adaptation, but its update is restricted to interpolation among pretrained task decoders; hence, when the actual aerial-manipulator dynamics fall outside the learned task span, especially under unseen payload release and coupled arm motion, adaptation remains limited. Active MLP performs better because it updates the residual model online, but its gradient-based last-layer adaptation introduces a speed--stability trade-off: aggressive learning can become noisy or unstable, while conservative learning adapts slowly after abrupt shifts. In contrast, the proposed method combines a structured history encoder with cross-variable attention and closed-form Bayesian decoder adaptation. The history encoder captures delayed aerodynamic and actuator effects, cross-variable attention models coupled platform--manipulator residuals, and adaptive covariance inflation increases the Bayesian update gain when innovation indicates model inconsistency. This allows the residual model to remain accurate before the shift and rapidly re-calibrate after the shift, producing the lowest tracking error across both in-distribution and out-of-distribution conditions.

\section{Conclusion}

We presented a residual learning framework for aerial manipulators that combines structured history encoding with latent-space Bayesian adaptation. By mapping recent state--input trajectories into a latent representation and modeling residual dynamics linearly in this space, the approach captures coupled, history-dependent effects while enabling efficient online updates.
A nonlinear encoder and linear decoder separate representation learning from adaptation, allowing closed-form recursive updates during deployment. adaptive covariance inflation, driven by an innovation-based consistency measure, enables tracking of nonstationary dynamics without resets or retraining.
For closed-loop control, the learned residual is incorporated into MPC as a locally constant disturbance, avoiding distribution shift and preserving computational tractability. Experiments demonstrate improved prediction accuracy, faster adaptation to changing dynamics, and reliable trajectory tracking under varying configurations in real time.
Future work will investigate uncertainty-aware control formulations and extensions to more complex manipulation tasks involving contact-rich interactions.

\section*{DECLARATION OF AI-ASSISTANCE}
ChatGPT was used only for minor language refinement; all technical content was created and verified by the authors, who take full responsibility for the publication.

\appendix
\section{Experimental Details}
\label{app:exp}

\textbf{Hardware Setup:}
For experimental validation, we built an aerial manipulator using a Tarot-650 quadrotor frame with SunnySky V4006 brushless motors, a 6S LiPo battery, and 14-inch propellers. A 2R serial-link manipulator (both link lengths $\approx 18$ cm each), actuated by Dynamixel XM430-W210-T motors, is mounted on the quadrotor.  A U2D2 Power Hub Board supplies power to the manipulator and gripper. The quadrotor is equipped with a CUAV X7+ flight controller running customized PX4 firmware and an onboard computer, Jetson Orin Nano Super. 
Communication between the Jetson and flight controller uses Micro XRCE-DDS for efficient, low-latency data exchange of PX4 uORB topics. 
Manipulator's joint actuation is handled through the \textit{ros2\_control} framework in current-based torque mode via the Joint Trajectory Controller (JTC).  
State feedback is obtained from an OptiTrack motion capture system (120 fps) fused with onboard IMU data for the quadrotor, while manipulator joint positions and velocities are provided by Dynamixel encoders.  
For inference, a host computer with an NVIDIA RTX 4080 GPU communicates with the onboard Jetson over WiFi. Sensor data are streamed to the host, where the learned dynamics model and control inputs are computed. The resulting body moments and collective thrust commands are transmitted back to the quadrotor at 50 Hz. All control computations are performed offboard, while low-level motor control remain onboard. 
Results are reported
for $10$ individual trials.

\textbf{Data Collection:}
To collect diverse training data, we use a vanilla PID controller to fly randomized smooth trajectories that excite the coupled dynamics of both the aerial platform and the manipulator. Each trajectory includes smooth variations in position, velocity, and joint angles, along with pick–and–drop actions to induce rapid changes in payload. Data are gathered under three payload conditions (0\,g, 200\,g, 400\,g) to generate manipulator-configuration and load-dependent variations in inertial coupling forces. For each condition, $5$ minutes of data are recorded at 100\,Hz, forming a time-indexed dataset  
$ \left[ (x(t_1), u(t_1)),\ (x(t_2), u(t_2)),\ \dots) \right]^\top,
$ sampled at timestamps \( \{t_1, t_2, \dots\} \). 

\textbf{Nominal Dynamics of the Aerial Manipulator}

The continuous-time nominal dynamics are given by
\begin{align}
\dot p &= v, \qquad
\dot v = \frac{1}{m} f z_B + g_I, \nonumber \\
\dot q &= \frac{1}{2} q \otimes
\begin{bmatrix}
0 \\ \omega
\end{bmatrix}, \qquad
\dot \omega = J^{-1}\!\left(M - \omega \times J\omega\right), \nonumber \\
\dot q_m &= v_m, \qquad
\dot v_m = B(u_m - g_m).
\end{align}

Here \(p, v \in \mathbb{R}^3\) denote the quadrotor position and velocity in the inertial frame, \(q \in \mathbb{S}^3\) is the unit quaternion representing the vehicle orientation, and \(\omega \in \mathbb{R}^3\) is the body-frame angular velocity. The vector \(z_B \in \mathbb{R}^3\) denotes the body-frame thrust direction expressed in the inertial frame, and the gravitational acceleration is \(g_I = [0,\,0,\,-g_0]^\top\), where \(g_0 = 9.81~\mathrm{m/s^2}\).
The variables \(q_m \in \mathbb{R}^2\) and \(v_m \in \mathbb{R}^2\) denote the manipulator joint positions and velocities. The matrix $
B = \mathrm{diag}\!\left(J_1^{-1},\, J_2^{-1}\right) $
contains the inverse joint inertias, where \(J_1, J_2 > 0\). The vector
$g_m =
\begin{bmatrix}
\gamma_1 \cos q_1 + \gamma_2 \cos(q_1 + q_2) &
\gamma_2 \cos(q_1 + q_2)
\end{bmatrix}^{\top}$,
represents the gravity torques acting on the manipulator joints, with
$
\gamma_1 = (m_1 l_{c1} + m_2 l_1)g_0,
\qquad
\gamma_2 = m_2 l_{c2} g_0 $.
The control input is $u =
\begin{bmatrix}
f & M^\top & u_m^\top
\end{bmatrix}^\top $,
where \(f\) denotes the total thrust magnitude, \(M \in \mathbb{R}^3\) the body torques, and \(u_m \in \mathbb{R}^2\) the joint torques.
The complete system state is $x =
\begin{bmatrix}
p^\top & v^\top & q^\top & \omega^\top & q_m^\top & v_m^\top
\end{bmatrix}^\top
\in \mathbb{R}^{17}$.
The nominal model assumes decoupled quadrotor and manipulator dynamics, while coupling effects and other unmodeled disturbances are captured by the learned residual dynamics.
The continuous-time dynamics are discretized using a Runge--Kutta integration scheme to obtain the discrete-time nominal model $
x_{k+1} = f_{\mathrm{nom}}(x_k, u_k)$,
which is used for prediction within the MPC framework. The nominal model parameter details are summarized in Table~\ref{tab:nominal_parameters}.

\begin{table}[t]
\centering
\renewcommand{\arraystretch}{1.0}
\caption{Parameters for the nominal aerial manipulator model}
\label{tab:nominal_parameters}
\scalebox{0.95}{
\begin{tabular}{lll}
\hline
\textbf{Symbol} & \textbf{Description} & \textbf{Units} \\
\hline
$m$ & Total mass of the quadrotor platform &  2.2 kg \\
$J$ & Quadrotor inertia matrix in body frame & kg\,m$^2$ \\
$g_0$ & Gravitational acceleration vector & 9.8 m/s$^2$ \\
\hline
$m$ & Total mass of the quadrotor platform & 2.2 kg \\
$J$ & Quadrotor inertia matrix in body frame & kg\,m$^2$ \\
$g_0$ & Gravitational acceleration vector & 9.8 m/s$^2$ \\
\hline
$m_1$ & Mass of manipulator link 1 & 0.22 kg \\
$m_2$ & Mass of manipulator link 2 & 0.3 kg \\
$l_1$ & Length of manipulator link 1 & 0.12 m \\
$l_2$ & Length of manipulator link 2 & 0.16 m \\
$l_{c1}$ & COM distance of link 1 from joint 1 & 0.05 m \\
$l_{c2}$ & COM distance of link 2 from joint 2 & 0.1 m \\
$J_1$ & Effective inertia of manipulator joint 1 & $8.36 \times 10^{-4}$ kg\,m$^2$ \\
$J_2$ & Effective inertia of manipulator joint 2 & $3.76 \times 10^{-3}$ kg\,m$^2$ \\
\hline
\end{tabular}
}
\end{table}

\textbf{Implementation Details and Parameter Selection:}
The offline training configuration and online adaptation parameters are summarized in Tables~\ref{tab:offline_training} and~\ref{tab:online_adaptation}. The model is implemented in JAX and trained offline using Adam to minimize the one-step residual prediction loss in \eqref{eq:offline_objective}. Offline training jointly learns the encoder parameters and the initial decoder $\Theta_0$, after which the encoder is fixed and only the linear decoder is adapted online.

The online parameters determine the trade-off between adaptation speed and robustness. The initial decoder covariance is set through the prior precision $\Lambda$, with $P_{0|0,j}=\Lambda^{-1}$, while the observation noise variances $\{\sigma_j^2\}_{j=1}^d$ regulate how strongly new residual observations influence the recursive update. Model consistency is monitored using the normalized innovation score $\psi_k$ and its exponentially weighted moving average $g_k$, with smoothing factor $\alpha$ and threshold $\eta$. Adaptive covariance inflation is controlled by $\lambda_k$, computed from $g_{k-1}$ according to \eqref{eq:lambda_adaptive}, where $\beta$ determines the sensitivity of the adaptation to persistent prediction mismatch. In practice, $\alpha$ and $\beta$ are chosen to ensure stable operation across all experiments without retuning.

For fair comparison, all baselines are implemented in a unified setting where they learn only the residual dynamics on top of the same nominal model and are integrated within the same MPC framework.

\begin{table}[h]
\centering
\footnotesize
\renewcommand{\arraystretch}{1.0}
\caption{Offline training configuration.}
\scalebox{0.95}{
\label{tab:offline_training}
\begin{tabular}{l c}
\toprule
\textbf{Parameter} & \textbf{Value} \\
\midrule
Number of encoder layers & $3$ \\
History length ($h$) & $15$ \\
Segment length ($L_{\mathrm{seg}}$) & $5$ \\
Embedding dimension ($d_{\mathrm{model}}$) & $64$ \\
Number of attention heads & $4$ \\
Feedforward hidden dimension & $64$ \\
Projection network $f_{\mathrm{proj}}$ &  MLP: $64 \rightarrow 32 \rightarrow 16$; ELU \\
Latent dimension ($\ell$) & $16$ \\
Optimizer & Adam \\
Learning rate & $1 \times 10^{-3}$ \\
Batch size & $256$ \\
Training epochs & $100$ \\
Weight decay & $1 \times 10^{-5}$ \\
\bottomrule
\end{tabular}
}
\end{table}

\begin{table}[h]
\centering
\footnotesize
\renewcommand{\arraystretch}{1.0}
\caption{Online adaptation parameters.}
\label{tab:online_adaptation}
\scalebox{0.95}{
\begin{tabular}{l c}
\toprule
\textbf{Parameter} & \textbf{Value} \\
\midrule
Prior precision ($\Lambda$) 
& $10\,I_{16}$ \\
Initial posterior covariance ($P_{0|0,j}$) 
& $0.1\,I_{16}$ \\
Observation noise variance ($\sigma_j^2$, per output) 
& $2.5 \times 10^{-3}$ \\
Decoder update frequency 
& $50\,\mathrm{Hz}$ \\
EWMA smoothing factor ($\alpha$) 
& $0.1$ \\
Consistency threshold ($\eta$) 
& $8.0$ \\
Adaptation gain ($\beta$) 
& $2.0$ \\
\bottomrule
\end{tabular}
}
\end{table}

\textbf{Model Predictive Control.}
To solve the OCP at each control step, we adopt a \textit{direct shooting} approach, where only the control inputs $\{u_{k+i}\}_{i=0}^{N-1}$ are treated as decision variables.  The resulting non-linear optimization problem is solved using \texttt{Optimistix}~\cite{optimistix}, which provides differentiable optimization routines in JAX. Additional regularization terms for control smoothness (weight = 10) and soft constraint penalties (weight = 1000) are included in the cost function. Further details are provided in  Table~\ref{tab:mpc_config}.

\begin{table}[h]
\centering
\footnotesize
\renewcommand{\arraystretch}{1.1}
\caption{MPC configuration and cost parameters.}
\label{tab:mpc_config}
\scalebox{0.89}{
\begin{tabular}{l c}
\toprule
\textbf{Parameter} & \textbf{Value} \\
\midrule
$Q$
& $\mathrm{diag}([50,50,80,10,10,10,1,5,5,5,1,1,1,20,20,5,5])$ \\
$P$ & $4\,Q$ \\
$R$ & $\mathrm{diag}([10,\,20,\,20,\,20,\,10,\,10])$ \\
Input constraints 
& \( \text{min: } [0.1, -0.5, -0.5, -0.5, 1.0, 1.0]\) \\
Input constraints 
& \( \text{max: } [20, 0.5, 0.5, 0.5, 1.0, 1.0] \) \\
Input Intialization  & Hover thrust + zero torques \\
 N & $20$ (covering a $1$ second horizon)\\
\bottomrule
\end{tabular}
}
\end{table}

\appendix
\section{Theoretical Analysis}
\label{app:theory}

This appendix summarizes key properties of the proposed latent linear residual model and its recursive Bayesian adaptation. The results are stated under standard assumptions and are intended to clarify estimator behavior rather than provide exhaustive convergence guarantees.

\subsection{Assumptions}

\begin{assumption}[Bounded latent features]
\label{ass:bounded_features}
There exists $c_z>0$ such that $
\|z_k\| \le c_z,  ~~\forall k \ge 0$.
\end{assumption}

\begin{assumption}[Latent residual approximation with bounded mismatch]
\label{ass:realizability}
For each output dimension $j$, there exists $\theta_j^\star \in \mathbb{R}^{\ell}$ such that $
\delta_{k+1}^{(j)} = z_k^\top \theta_j^\star + \varepsilon_{k+1}^{(j)} + \xi_{k,j}$,
where $\varepsilon_{k+1}^{(j)} \sim \mathcal{N}(0,\sigma_j^2)$ and $|\xi_{k,j}| \le \bar{\xi}_j$.
\end{assumption}

\begin{assumption}[Positive definite prior]
\label{ass:prior}
For each $j$, the initial covariance satisfies $
P_{0|0,j} \succ 0$.
\end{assumption}

\begin{assumption}[Finite-window excitation]
\label{ass:pe}
There exist $T \in \mathbb{N}$ and $\lambda_T>0$ such that for all $k \ge T$, $
\sum_{t=k-T}^{k} z_t z_t^\top \succeq \lambda_T I$.
\end{assumption}

These assumptions are standard in recursive identification. Assumption~\ref{ass:realizability} characterizes the local accuracy of the latent linear model, while Assumption~\ref{ass:pe} ensures that informative data are available over time.

\subsection{Bayesian Update Properties}

\begin{lemma}[Recursive Bayesian linear update]
\label{lem:blr_exact}
For each output dimension $j$, the update equations in Section~\ref{sec:bayesian_update} correspond to recursive Bayesian estimation for the linear-Gaussian model
\[
\theta_{k,j} = \theta_{k-1,j} + w_{k,j}, 
\qquad
\delta_{k+1}^{(j)} = z_k^\top \theta_{k,j} + \varepsilon_k^{(j)},
\]
with Gaussian process noise $w_{k,j}$ and observation noise $\varepsilon_k^{(j)}$.
\end{lemma}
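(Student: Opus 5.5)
We argue by induction on $k$ that the recursion of Section~\ref{sec:bayesian_update} propagates exactly the Gaussian posterior $p(\theta_{k,j}\mid\mathcal{D}_{k+1})=\mathcal{N}(\mu_{k|k,j},P_{k|k,j})$. Throughout we fix one output dimension $j$ and drop the subscript $j$. We assume that $w_k$, $\varepsilon_k$ and $\theta_0$ are mutually independent. We also assume that $z_k=\phi(H_k)$ is a deterministic function of past data, so that we may condition on it; this holds because the encoder is frozen. The base case is the Gaussian prior $\mathcal{N}(\mu_{0|0},P_{0|0})$ with $P_{0|0}\succ0$ (Assumption~\ref{ass:prior}). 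Suppose that, given the data observed so far, $\theta_{k-1}\sim\mathcal{N}(\mu_{k-1|k-1},P_{k-1|k-1})$. We then show that one prediction step followed by one update step reproduces the stated formulas.

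\emph{Prediction.} By definition, $\theta_k=\theta_{k-1}+w_k$ with $w_k\sim\mathcal{N}(0,Q_k)$ independent of the past. A sum of independent Gaussians is Gaussian, so the prior is $\mathcal{N}(\mu_{k-1|k-1},P_{k-1|k-1}+Q_k)$, which is the prediction step. The inflation rule $Q_k=(\lambda_k^{-1}-1)P_{k-1|k-1}$ is a special case. Since $\lambda_k$ depends only on $g_{k-1}$, which is measurable with respect to past data, $Q_k\succeq0$ is known at time $k$ and the argument goes through unchanged.

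\emph{Joint Gaussian and conditioning.} Under the prior, the pair $(\theta_k,\delta_{k+1})$ with $\delta_{k+1}=z_k^\top\theta_k+\varepsilon_k$ is jointly Gaussian. Its parameters are mean $(\mu_{k|k-1},\,z_k^\top\mu_{k|k-1})$, cross-covariance $P_{k|k-1}z_k$, and scalar variance $S_k=z_k^\top P_{k|k-1}z_k+\sigma^2>0$. The Gaussian conditioning formula then gives the posterior mean $\mu_{k|k-1}+P_{k|k-1}z_kS_k^{-1}(\delta_{k+1}-z_k^\top\mu_{k|k-1})$, which is exactly $\mu_{k|k}$ with gain $K_k$. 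It also gives the posterior covariance $P_{k|k-1}-K_kS_kK_k^\top=(I-K_kz_k^\top)P_{k|k-1}$. This also identifies $\hat\delta_{k+1|k}$ and $S_k$ as the predictive mean and variance.

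\emph{Joseph form.} The one remaining algebraic check is that the Joseph form in \eqref{eq:recursive update} equals $(I-K_kz_k^\top)P_{k|k-1}$. Expanding gives
\[
(I-Kz^\top)P(I-Kz^\top)^\top+\sigma^2KK^\top=(I-Kz^\top)P-PzK^\top+K(z^\top Pz+\sigma^2)K^\top .
\]
Since $KS_k=Pz$, the last two terms cancel. This identity holds for the optimal gain, so the Joseph form is just a numerically symmetric rewriting of the same covariance. We expect this step to need the most care, together with the measurability of the adaptive $Q_k$ noted above. Neither is deep.

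\emph{Conclusion.} The posterior at time $k$ is again Gaussian with the stated moments, which closes the induction.
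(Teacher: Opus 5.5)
Your proposal is correct and takes the same route as the paper, whose proof is just the remark that the recursion is a standard Kalman filter in parameter space with $\theta_{k,j}$ as the latent state. You fill in the details the paper's sketch omits: the Gaussian prediction and conditioning steps, the identity showing the Joseph form equals $(I-K_kz_k^\top)P_{k|k-1}$ under the optimal gain, and the measurability of $z_k$ and of the data-dependent $Q_{k,j}$. All of these check out.
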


\begin{proof}[Sketch]
The result follows from standard Kalman filtering applied in parameter space, where $\theta_{k,j}$ is treated as the latent state and the residual is linear in that state.
\end{proof}

\begin{lemma}[Positive semidefiniteness and boundedness]
\label{lem:psd}
For all $k$ and $j$, the posterior covariance satisfies $P_{k|k,j} \succeq 0$. If $Q_{k,j}=0$, the covariance is non-increasing. With bounded process noise and finite-window excitation, the covariance sequence remains bounded.
\end{lemma}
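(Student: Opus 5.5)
Three claims need proof: positive semidefiniteness, monotonicity when $Q_{k,j}=0$, and boundedness under excitation. All three reduce to the prediction step $P_{k|k-1,j}=P_{k-1|k-1,j}+Q_{k,j}$ followed by the Joseph-form update in \eqref{eq:recursive update}. Throughout, fix $j$ and drop the index.

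\textbf{Positive semidefiniteness.} I would argue by induction.
\begin{itemize}
\item The base case is $P_{0|0}\succ 0$, by Assumption~\ref{ass:prior}.
\item If $P_{k-1|k-1}\succeq 0$ and $Q_k\succeq 0$, then $P_{k|k-1}\succeq 0$.
\item The Joseph form writes $P_{k|k}=AP_{k|k-1}A^\top+\sigma^2KK^\top$ with $A=I-Kz_k^\top$, which is a congruence of a PSD matrix plus a PSD rank-one term, so $P_{k|k}\succeq 0$.
\end{itemize}
Under the inflation scheme, $Q_k=(1/\lambda_k-1)P_{k-1|k-1}\succeq 0$ because $\lambda_k\in(0,1]$. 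So the induction covers the adaptive case as well.

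\textbf{Monotonicity when $Q_{k,j}=0$.} Substituting the optimal gain, the Joseph form collapses to the standard expression
\[
P_{k|k}=P_{k|k-1}-\frac{P_{k|k-1}z_kz_k^\top P_{k|k-1}}{z_k^\top P_{k|k-1}z_k+\sigma^2}.
\]
The subtracted term is PSD, so $P_{k|k}\preceq P_{k|k-1}$. With $Q_k=0$ this gives $P_{k|k}\preceq P_{k-1|k-1}$.

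It is also useful to record the information form: whenever $P_{k|k-1}\succ 0$, $P_{k|k}^{-1}=P_{k|k-1}^{-1}+z_kz_k^\top/\sigma^2$. Positive definiteness is preserved, since $P_{k|k}^{-1}$ is a PD matrix plus a PSD one and inflation only scales a PD matrix. Hence inverses exist at every step.

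\textbf{Boundedness.} This is the main step. I would work in information form, $\Omega_k:=P_{k|k}^{-1}$, so that one step reads $\Omega_k=(\Omega_{k-1}^{-1}+Q_k)^{-1}+z_kz_k^\top/\sigma^2$. The aim is a uniform lower bound $\Omega_k\succeq \omega I$, which gives $P_{k|k}\preceq \omega^{-1}I$.

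For the process noise I would take the hypothesis "bounded process noise" as $Q_k\preceq \bar q I$. Then I would use two facts.
\begin{itemize}
\item If $\Omega_{k-1}\succeq \omega' I$, then $(\Omega_{k-1}^{-1}+Q_k)^{-1}\succeq (1/\omega'+\bar q)^{-1}I$.
\item Over any window of length $T+1$, Assumption~\ref{ass:pe} adds at least $\lambda_T/\sigma^2$ of information in every direction.
\end{itemize}
The obstacle is combining the two, because prediction steps interleave with measurement updates and the lower bound degrades at each prediction.

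To handle this, I would unroll the recursion from $k-T$ to $k$ and use the operator-monotone bound
\[
(\Omega^{-1}+Q)^{-1}\succeq c\,\Omega \quad\text{whenever } \Omega\preceq M I \text{ and } Q\preceq \bar q I, \text{ with } c=(1+M\bar q)^{-1}.
\]
The upper bound $\Omega_t\preceq M I$ holds with $M=\|P_{0|0}^{-1}\|+(T+1)c_z^2/\sigma^2$ plus past contributions. I would control it by noting that each prediction step contracts information while each update adds at most $c_z^2/\sigma^2$ (Assumption~\ref{ass:bounded_features}), which gives a uniform $M$.

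Propagating the window updates through at most $T+1$ contractions then yields
\[
\Omega_k\succeq c^{T+1}\frac{\lambda_T}{\sigma^2}I \quad \text{for } k\ge T.
\]
For $k<T$, finitely many steps from $P_{0|0}\succ 0$ keep $P_{k|k}$ bounded by $\|P_{0|0}\|+k\bar q$. This gives $\sup_k\|P_{k|k}\|<\infty$.

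For the inflation scheme, bounded process noise amounts to requiring $\lambda_k\ge\underline\lambda>0$; the same argument then applies with $c$ replaced by $\underline\lambda$.
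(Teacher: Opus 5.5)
\textbf{Verdict: one genuine but repairable gap, in the boundedness step.}

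Your positive-semidefiniteness and monotonicity arguments are correct and follow the paper's sketch: the Joseph form is a congruence plus a PSD term, and information only accumulates when $Q_{k,j}=0$. Your treatment of the inflation scheme is also sound. There the prediction step is exactly $\Omega\mapsto\lambda_k\Omega\succeq\underline\lambda\,\Omega$, so no upper bound on $\Omega$ is needed.

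The gap is in the additive case $Q_k\preceq\bar qI$, at your claim that the \emph{actual} information matrices satisfy a uniform bound $\Omega_t\preceq MI$. Your justification (prediction contracts information, each update adds at most $c_z^2/\sigma^2$) only gives $\Omega_k\preceq\Omega_{k-1}+(c_z^2/\sigma^2)I$, i.e.\ linear growth. The prediction step caps information only if $Q_k\succeq\underline q I$ for some $\underline q>0$ (then $(\Omega^{-1}+Q_k)^{-1}\preceq\underline q^{-1}I$), and no lower bound on $Q_k$ is assumed.

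For example, take $Q_k=0$ except at sparse times where $Q_k=\bar qI$. Between those times, excitation gives $\Omega_k\succeq\Omega_{k-T-1}+(\lambda_T/\sigma^2)I$, so the ``past contributions'' in your $M$ are unbounded. At the next inflation step, the best constant in $(\Omega^{-1}+\bar qI)^{-1}\succeq c\,\Omega$ is exactly $(1+\bar q\|\Omega\|)^{-1}\to 0$. Your lower bound $c^{T+1}\lambda_T/\sigma^2$ therefore degenerates along such sequences, even though the conclusion itself is true.

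The repair is to run your contraction argument on an auxiliary sequence rather than on $\Omega_t$ itself.
\begin{enumerate}
\item The one-step map $F_t(\Omega)=(\Omega^{-1}+Q_t)^{-1}+z_tz_t^\top/\sigma^2$ is monotone in the Loewner order, since inversion reverses the order twice.
\item Choose $0<\epsilon\le\min\{1,\lambda_{\min}(\Omega_{k-T-1})\}$, set $\Omega'_{k-T-1}=\epsilon I$, and define $\Omega'_t=F_t(\Omega'_{t-1})$ for $t=k-T,\dots,k$. Monotonicity gives $\Omega_t\succeq\Omega'_t$ throughout the window.
\item Since $F_t$ adds at most $c_z^2/\sigma^2$ per step, $\Omega'_t\preceq M'I$ with $M'=1+(T+1)c_z^2/\sigma^2$, which is independent of $k$.
\item Your operator inequality then holds with the uniform constant $c=(1+M'\bar q)^{-1}$. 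Unrolling $\Omega'_t\succeq c\,\Omega'_{t-1}+z_tz_t^\top/\sigma^2$ gives
\[
\Omega_k\succeq\Omega'_k\succeq c^{T}\sum_{t=k-T}^{k}\frac{z_tz_t^\top}{\sigma^2}\succeq c^{T}\frac{\lambda_T}{\sigma^2}I,
\qquad k\ge T+1 .
\]
\end{enumerate}
Alternatively, use the classical comparison argument: for a fixed regressor sequence, $P_{k|k}$ is dominated by the error covariance of least squares on the window $[k-T,k]$, which is bounded independently of $k$.

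With this change your proof is complete. It then supplies the detail that the paper's one-line sketch (``inflation is balanced by informative measurements'') leaves implicit.
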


\begin{proof}[Sketch]
Positive semidefiniteness follows from the Joseph-form update. In the absence of process noise, the estimator accumulates information and the covariance contracts. With bounded process noise, covariance inflation is balanced by informative measurements, yielding bounded covariance.
\end{proof}

These results ensure that the recursive update is well-posed and numerically stable.

\subsection{Prediction Error}

\begin{proposition}[One-step residual prediction error]
\label{prop:error}
Under Assumptions~\ref{ass:bounded_features}--\ref{ass:realizability}, the one-step prediction error satisfies
\[
\left|\delta_{k+1}^{(j)} - z_k^\top \mu_{k|k-1,j}\right|
\le
c_z \|\theta_j^\star - \mu_{k|k-1,j}\| + \bar{\xi}_j + |\varepsilon_{k+1}^{(j)}|.
\]
\end{proposition}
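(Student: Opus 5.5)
The plan is a direct algebraic decomposition followed by the triangle and Cauchy--Schwarz inequalities. No property of the Kalman recursion is needed beyond the fact that $\mu_{k|k-1,j}$ is a fixed vector determined by data up to time $k-1$. The bound is deterministic and pathwise, so it holds for every realization of the noise.

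\textbf{Step 1: rewrite the innovation.} Substitute the representation of Assumption~\ref{ass:realizability}, $\delta_{k+1}^{(j)} = z_k^\top \theta_j^\star + \varepsilon_{k+1}^{(j)} + \xi_{k,j}$, into the prediction error. Adding and subtracting $z_k^\top\theta_j^\star$ gives
\[
\delta_{k+1}^{(j)} - z_k^\top \mu_{k|k-1,j}
= z_k^\top\bigl(\theta_j^\star - \mu_{k|k-1,j}\bigr) + \xi_{k,j} + \varepsilon_{k+1}^{(j)}.
\]
The error thus splits into three parts: a parameter-estimation term, a model-mismatch term, and an irreducible noise term.

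\textbf{Step 2: bound each term.} Taking absolute values and applying the triangle inequality separates the three terms. For the first term, Cauchy--Schwarz gives $|z_k^\top(\theta_j^\star-\mu_{k|k-1,j})| \le \|z_k\|\,\|\theta_j^\star-\mu_{k|k-1,j}\|$. Assumption~\ref{ass:bounded_features} then replaces $\|z_k\|$ by $c_z$. For the mismatch term, $|\xi_{k,j}|\le\bar\xi_j$ holds directly by Assumption~\ref{ass:realizability}. The noise term $|\varepsilon_{k+1}^{(j)}|$ is left as is. Summing the three bounds yields the claimed inequality.

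\textbf{Main obstacle.} There is essentially no technical difficulty. The only point needing care is bookkeeping of the noise index: the online model in Section~\ref{sec:bayesian_update} writes $\varepsilon_k^{(j)}$, while Assumption~\ref{ass:realizability} writes $\varepsilon_{k+1}^{(j)}$. I would simply use the assumption's convention, since the statement is phrased in it. The key remark is that the estimation error $\|\theta_j^\star-\mu_{k|k-1,j}\|$ is left unbounded here. Controlling it would require Lemma~\ref{lem:psd} together with Assumption~\ref{ass:pe}, but that is beyond the scope of this proposition.
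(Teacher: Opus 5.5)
Your proposal is correct and matches the paper's own sketch: substitute the representation from Assumption~\ref{ass:realizability}, then apply the triangle inequality and Cauchy--Schwarz with $\|z_k\|\le c_z$ from Assumption~\ref{ass:bounded_features}. In fact the bound holds with any fixed vector in place of $\mu_{k|k-1,j}$, so you do not need even the weak property of the filter that you mention.
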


\begin{proof}[Sketch]
Substitute the representation in Assumption~\ref{ass:realizability} and apply the triangle inequality together with Cauchy--Schwarz.
\end{proof}

This decomposition separates the prediction error into parameter estimation error, approximation mismatch, and stochastic noise.

\subsection{Adaptive Covariance Inflation}

\begin{lemma}[Effect of covariance inflation]
\label{lem:forgetting}
Consider the covariance propagation
\[
P_{k|k-1,j} = P_{k-1|k-1,j} + Q_{k,j},
~~Q_{k,j} = \left(\frac{1}{\lambda_k}-1\right) P_{k-1|k-1,j},
\]
with $\lambda_k \in (0,1]$. Then $
P_{k|k-1,j} = \frac{1}{\lambda_k} P_{k-1|k-1,j}$.
Hence, smaller values of $\lambda_k$ increase the prior covariance and lead to larger updates in response to new observations.
\end{lemma}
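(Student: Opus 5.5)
The identity is a one-line substitution, so the real work is making precise the claim that smaller $\lambda_k$ leads to larger updates. I would handle the two parts of Lemma~\ref{lem:forgetting} in sequence.

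\textbf{Step 1: the identity.} Substitute $Q_{k,j}=(1/\lambda_k-1)P_{k-1|k-1,j}$ into the propagation step. This gives $P_{k|k-1,j}=P_{k-1|k-1,j}+(1/\lambda_k-1)P_{k-1|k-1,j}=\lambda_k^{-1}P_{k-1|k-1,j}$.

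It is also worth recording that $Q_{k,j}$ is a legitimate covariance. Since $\lambda_k\in(0,1]$, the scalar $1/\lambda_k-1\ge 0$. By Lemma~\ref{lem:psd}, $P_{k-1|k-1,j}\succeq 0$. Hence $Q_{k,j}\succeq 0$, and the random-walk model of Lemma~\ref{lem:blr_exact} remains well defined.

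\textbf{Step 2: monotonicity in $\lambda_k$.} Write $P:=P_{k-1|k-1,j}\succeq 0$ and $c:=1/\lambda_k\ge 1$. Then $c\,P\succeq P$, and $c\,P$ is nondecreasing in $c$ in the Loewner order. So decreasing $\lambda_k$ increases the prior covariance.

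For the update magnitude, insert the scaled prior into the gain of \eqref{eq:recursive update}:
\[
K_{k,j}(c)=\frac{c\,P z_k}{c\,z_k^\top P z_k+\sigma_j^2}.
\]
Set $a:=z_k^\top P z_k\ge 0$. The gain is then $Pz_k$ multiplied by the scalar $g(c)=c/(ca+\sigma_j^2)$. Its derivative is
\[
g'(c)=\frac{\sigma_j^2}{(ca+\sigma_j^2)^2}>0,
\]
so $g$ is strictly increasing in $c$, and hence decreasing in $\lambda_k$.

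It follows that the mean correction $\mu_{k|k,j}-\mu_{k|k-1,j}=g(c)\,Pz_k\,(\delta^{(j)}_{k+1}-z_k^\top\mu_{k|k-1,j})$ has fixed direction. Its norm grows monotonically as $\lambda_k$ decreases, for any given innovation. This makes ``larger updates'' precise.

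\textbf{Main obstacle.} The difficulty is only in interpretation, since the algebra is routine. The gain depends on $c$ through a scalar only because the entire prior is scaled uniformly. I would state explicitly that $Pz_k$ is the fixed direction and that only its magnitude changes.

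There is one degenerate case. If $Pz_k=0$, the update is zero for every $\lambda_k$, so the monotonicity is non-strict there. I would note that $P\succ 0$, which is maintained from Assumption~\ref{ass:prior}, together with $z_k\neq 0$ gives strict increase.
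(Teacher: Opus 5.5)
Your proposal is correct and follows the paper's own route: you get the identity by direct substitution, and you justify the ``larger updates'' claim through monotonicity of the Kalman gain in the prior covariance. Your explicit computation that the gain equals $g(c)\,P z_k$ with $g'(c)=\sigma_j^2/(ca+\sigma_j^2)^2>0$ makes precise the paper's one-line remark that the gain is monotone in the prior covariance along the observation direction. One small caveat: strict growth of the update norm, as opposed to strict growth of the gain, also requires a nonzero innovation.
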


\begin{proof}[Sketch]
The identity follows directly by substitution. The Kalman gain depends monotonically on the prior covariance along the observation direction, so increasing $P_{k|k-1,j}$ increases responsiveness to new data.
\end{proof}

This mechanism acts as an adaptive forgetting factor, enabling rapid adaptation when model mismatch is detected.

\subsection{EWMA Consistency Monitoring}

\begin{proposition}[EWMA response to mismatch]
\label{prop:ewma}
Let $\psi_k$ denote the normalized innovation score and
\[
g_k = \alpha \psi_k + (1-\alpha) g_{k-1},
\qquad \alpha \in (0,1].
\]
If $\mathbb{E}[\psi_k]=c_0$ under nominal conditions, then $\mathbb{E}[g_k]\to c_0$. If $\mathbb{E}[\psi_k]=c_1>c_0$ over a sustained interval, then $\mathbb{E}[g_k]$ increases toward $c_1$.
\end{proposition}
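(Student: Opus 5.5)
Since the EWMA recursion is linear and $\mathbb{E}[\cdot]$ is linear, I will take expectations of both sides and work with the scalar deterministic sequence $m_k := \mathbb{E}[g_k]$. This sequence satisfies
\[
m_k = \alpha\,\mathbb{E}[\psi_k] + (1-\alpha)\, m_{k-1}, \qquad m_0 = g_0 = 0 .
\]
No independence or stationarity of the $\psi_k$ is needed for this step. The only requirement is that $\mathbb{E}[\psi_k]$ exists, which I will take as implicit in the hypothesis that it equals $c_0$ or $c_1$. If one wants finiteness to be automatic, $\psi_k$ is a quadratic form with $S_k \succeq \min_j \sigma_j^2\, I$, and $e_k$ has finite second moments under Assumptions~\ref{ass:bounded_features}--\ref{ass:realizability}.

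For the nominal regime, I will subtract the fixed point. Setting $r_k := m_k - c_0$, the recursion gives $r_k = (1-\alpha) r_{k-1}$, so $r_k = (1-\alpha)^{k-k_0} r_{k_0}$ from any starting index $k_0$ of the nominal interval. For $\alpha \in (0,1]$ we have $|1-\alpha|<1$, hence $r_k \to 0$ geometrically and $\mathbb{E}[g_k]\to c_0$. The case $\alpha=1$ is trivial, since then $m_k = c_0$ after one step.

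For the sustained-mismatch regime, suppose $\mathbb{E}[\psi_k]=c_1$ for $k \in \{k_1, k_1+1, \dots, k_1+M\}$. Unrolling the recursion from $k_1-1$ gives the closed form
\[
m_k = c_1 + (1-\alpha)^{k-k_1+1}\,\bigl(m_{k_1-1} - c_1\bigr).
\]
This shows two things. First, $|m_k - c_1|$ is non-increasing and contracts geometrically. Second, when $m_{k_1-1} < c_1$, which is the case of interest (for instance, $m_{k_1-1}\approx c_0 < c_1$ after a nominal phase), the sequence $m_k$ is monotonically non-decreasing and converges to $c_1$ as $M\to\infty$. This is exactly the statement that $\mathbb{E}[g_k]$ "increases toward $c_1$." I would also note the implied detection-delay estimate: with $m_{k_1-1}\approx c_0$ and $c_0<\eta<c_1$, the threshold $\eta$ in \eqref{eq:lambda_adaptive} is crossed in expectation after roughly $\log\bigl((c_1-c_0)/(c_1-\eta)\bigr)/\bigl(-\log(1-\alpha)\bigr)$ steps.

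I expect no real obstacle, because the argument is a first-order linear difference equation. The only subtle point is interpretive. The statement concerns expectations only, so the plan deliberately avoids any almost-sure or concentration claim about $g_k$ itself. A remark could add that bounded variance of $\psi_k$ yields a variance bound for $g_k$ of order $\alpha/(2-\alpha)$ times the variance of $\psi_k$ under independence, but this is not needed for the proposition.
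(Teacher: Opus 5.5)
Your proof is correct and follows the same route as the paper's sketch: take expectations to get the stable first-order recursion $m_k=\alpha\,\mathbb{E}[\psi_k]+(1-\alpha)m_{k-1}$, whose fixed point is the mean of the input. Your closed form $m_k=c_1+(1-\alpha)^{k-k_1+1}(m_{k_1-1}-c_1)$ adds detail the sketch omits, namely that ``increases'' requires $m_{k_1-1}<c_1$; this holds exactly when the nominal phase starts from $g_0=0$, because $\psi_k\ge 0$ gives $c_0\ge 0$ and hence $m_k=c_0\bigl(1-(1-\alpha)^{k}\bigr)\le c_0<c_1$.
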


\begin{proof}[Sketch]
Taking expectations yields a stable linear recursion whose equilibrium equals the mean of the input.
\end{proof}

This justifies using $g_k$ as a smooth indicator of persistent model mismatch.

\subsection{Closed-Loop Implication}

\begin{proposition}[Bounded residual-induced disturbance]
\label{prop:closed_loop}
Let $\Theta^\star \in \mathbb{R}^{d\times \ell}$ denote the matrix formed by stacking $\{\theta_j^\star\}_{j=1}^d$, and let $\hat{\delta}_{k+1|k}=\Theta_{k-1}z_k$. Under Assumptions~\ref{ass:bounded_features}--\ref{ass:realizability},
\[
\|\delta_{k+1}-\hat{\delta}_{k+1|k}\|
\le
c_z \|\Theta^\star-\Theta_{k-1}\|_F
+
\|\bar{\xi}\|
+
\|\varepsilon_{k+1}\|,
\]
where $\bar{\xi}=[\bar{\xi}_1,\dots,\bar{\xi}_d]^\top$.
\end{proposition}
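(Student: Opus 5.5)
The plan is to lift Proposition~\ref{prop:error} from scalar components to the full vector. By Assumption~\ref{ass:realizability}, each component of the true residual can be written as $\delta_{k+1}^{(j)} = z_k^\top\theta_j^\star + \varepsilon_{k+1}^{(j)} + \xi_{k,j}$. Stacking over $j=1,\dots,d$ gives the vector identity
\[
\delta_{k+1} = \Theta^\star z_k + \varepsilon_{k+1} + \xi_k ,
\qquad \xi_k := [\xi_{k,1},\dots,\xi_{k,d}]^\top .
\]
The prediction is $\hat\delta_{k+1|k}=\Theta_{k-1}z_k$. The rows of $\Theta_{k-1}$ are the posterior means $\mu_{k-1|k-1,j}$, which coincide with the prior means $\mu_{k|k-1,j}$ because the random-walk prediction step leaves the mean unchanged. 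Subtracting, the innovation becomes
\[
\delta_{k+1}-\hat\delta_{k+1|k} = (\Theta^\star-\Theta_{k-1})z_k + \xi_k + \varepsilon_{k+1}.
\]

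Next apply the triangle inequality to the three terms. The first term is handled by $\|A z\|\le \|A\|_2\|z\|\le \|A\|_F\|z\|$, since the spectral norm is dominated by the Frobenius norm. Alternatively, apply Cauchy--Schwarz row by row, $|(\theta_j^\star-\mu_{k|k-1,j})^\top z_k|\le \|\theta_j^\star-\mu_{k|k-1,j}\|\,\|z_k\|$, and then sum the squares over $j$. Either route, combined with Assumption~\ref{ass:bounded_features}, gives the bound $c_z\|\Theta^\star-\Theta_{k-1}\|_F$.

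For the mismatch term, the componentwise bound $|\xi_{k,j}|\le\bar\xi_j$ gives $\|\xi_k\|^2=\sum_j \xi_{k,j}^2\le \sum_j\bar\xi_j^2=\|\bar\xi\|^2$, so $\|\xi_k\|\le\|\bar\xi\|$. The noise term $\|\varepsilon_{k+1}\|$ is kept as it is.

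The only delicate point is bookkeeping rather than analysis. The innovation uses $\Theta_{k-1}$, formed from posterior means at time $k-1$, while Proposition~\ref{prop:error} is phrased with $\mu_{k|k-1,j}$. I will state explicitly that these coincide under the prediction step of Section~\ref{sec:bayesian_update}, with or without covariance inflation (Lemma~\ref{lem:forgetting}), since inflation changes only the covariance. I will also note that the noise index $\varepsilon_{k+1}$ follows Assumption~\ref{ass:realizability}. Beyond these two remarks, the argument is deterministic and holds pathwise for every realization.
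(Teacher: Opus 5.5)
Your proof is correct and takes essentially the paper's approach: the paper's sketch applies Proposition~\ref{prop:error} componentwise and collects the bounds, and your vector-level version (triangle inequality first, then $\|Az\|\le\|A\|_F\|z\|$ and $\|\xi_k\|\le\|\bar\xi\|$) is the same argument with the aggregation step made explicit. Your bookkeeping remark is also correct: the rows of $\Theta_{k-1}$ equal $\mu_{k|k-1,j}$ because the prediction step leaves the mean unchanged, with or without inflation, and this is exactly the identification the paper's sketch relies on implicitly.
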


\begin{proof}[Sketch]
Apply Proposition~\ref{prop:error} componentwise and collect the bounds.
\end{proof}

This result shows that the learned residual model introduces a bounded disturbance into the nominal dynamics, supporting its use within MPC as a correction term.

\bibliographystyle{IEEEtran}
\bibliography{root}

\end{document}

%% file: our_settings.tex

